\theoremstyle{plain}
\newtheorem{theorem}{Theorem}
\newtheorem{lemma}{Lemma}
\newtheorem*{prop*}{Proposition}
\theoremstyle{definition}
\newtheorem*{defn*}{Definition}
\newtheorem*{exmp*}{Example}
\newtheorem*{conj*}{Conjecture}
\theoremstyle{remark}
\newtheorem*{rmk*}{Remark}
\def \ifempty#1{\def\temp{#1} \ifx\temp\empty }
\newcommand{\q}{\ensuremath{\mathbf{q}}}
\newcommand{\y}{\ensuremath{\mathbf{y}}}
 \def\Hy@colorlink##1{\begingroup}%
 \def\Hy@endcolorlink{\endgroup}%
 \def\@pdfborder{0 0 0}%
 \let\@pdfborderstyle\ltx@empty
\renewcommand{\Re}{\mathbb{R}}
\def\Y{{\cal Y}}
\def\D{{\cal D}}
\def\L{{\cal L}}
\def\O{{\cal O}}
\def\s{\mathbf{s}}
\def\y{\mathbf{y}}
\def\q{\mathbf{q}}
\def\r{\mathbf{r}}
\title{Deep Reinforcement Learning via L-BFGS Optimization}
\author{Jacob Rafati \\ 
\texttt{jrafatiheravi@ucmerced.edu}\\
Electrical Engineering and Computer Science\\
\and
Roummel F. Marcia\\
\texttt{rmarcia@ucmerced.edu}\\
Department of Applied Mathematics
\and
Univeristy of California, Merced.\\
5200 North Lake Road, Merced, CA, 95343, USA.}
\date{}
\begin{document}

\maketitle

\section*{Abstract}
Reinforcement Learning (RL) algorithms allow artificial agents to improve their action selections so as to increase rewarding experiences in their environments. The learning can become intractably slow as the state space of the environment grows. This has motivated methods like Q-learning to learn representations of the state by a function approximator. Impressive results have been produced by using deep artificial neural networks. However, deep RL algorithms require solving a nonconvex and nonlinear unconstrained optimization problem. Methods for solving the optimization problems in deep RL are restricted to the class of first-order algorithms, such as stochastic gradient descent (SGD). The major drawback of the SGD methods is that they have the undesirable effect of not escaping saddle points and their performance can be seriously obstructed by ill-conditioning. Furthermore, SGD methods require exhaustive trial and error to fine-tune many learning parameters. Using second derivative information can result in improved convergence properties, but computing the Hessian matrix for large-scale problems is not practical. Quasi-Newton methods require only first-order gradient information, like SGD, but they can construct a low rank approximation of the Hessian matrix and result in superlinear convergence. The limited-memory Broyden-Fletcher-Goldfarb-Shanno (L-BFGS) approach is one of the most popular quasi-Newton methods that construct positive definite Hessian approximations. In this paper, we introduce an efficient optimization method, based on the limited memory BFGS quasi-Newton method using line search strategy -- as an alternative to SGD methods. Our method bridges the disparity between first order methods and second order methods by continuing to use gradient information to calculate a low-rank Hessian approximations. We provide formal convergence analysis as well as empirical results on a subset of the classic ATARI 2600 games. Our results show a robust convergence with preferred generalization characteristics, as well as fast training time and no need for the experience replaying mechanism.	

\section{Introduction}
\label{sec:intro}
Reinforcement learning (RL) -- a class of machine learning problems -- involves learning how to map situations to actions so as to maximize numerical reward signals received during the experiences that an artificial agent has as it interacts with its environment \citep{RL-Book:Sutton:Barto:1998}.
One of the challenges that arise in real-world RL problems is the ``curse of dimensionality''. Nonlinear function approximators coupled with reinforcement learning have made it possible to learn abstractions over high dimensional state spaces \citep{SuttonRS:1996:Coarse,Rafati-Noelle:2015:CSC,Rafati-Noelle:2017:CCCN,Melo:2008:Analysis-Q-learnings}.
Successful examples of using neural networks for reinforcement learning include learning how to play the game of
Backgammon at the Grand Master
level~\citep{TesauroG:1995:TDGammon}. More recently, researchers at DeepMind Technologies used deep Q-learning algorithm to play various ATARI games from raw screen image stream \citep{DeepMind:Atari:2013,DeepMind:Nature:2015}. The Deep Q-learning algorithm \citep{DeepMind:Atari:2013} employed a convolutional neural network (CNN) as the state-action value function approximation. The resulting performance on these games was frequently at or better than the human expert level. In another effort, DeepMind used deep CNNs and a Monte Carlo Tree Search algorithm that combines supervised learning and reinforcement learning to learn how to play the game of Go at a super-human level \citep{DeepMind-AlphaGo}. 

The majority of deep learning problems, including deep RL algorithms, require solving an unconstrained optimization of a highly nonlinear and nonconvex objective function of the form 
\begin{align}
\min_{w \in \Re^n} \L(w) = \frac{1}{N} \sum_{i}^{N} \ell_i(w)
\label{eq:intro-optimization}
\end{align}
where $w \in \Re^n$ is the vector of trainable parameters of the CNN model. There are various algorithms proposed in machine learning and optimization literature to solve \eqref{eq:intro-optimization}. Among those one can name first-order methods such as stochastic gradient descent (SGD) methods \citep{Robbins:1951:SGD} and quasi-Newton methods \citep{le2011optimization}. For instance, a variant of SGD method was used in DeepMind's implementation of deep Q-Learning algorithm \citep{DeepMind:Nature:2015}. 

Since both $n$ and $N$ are large in large-scale problems, the computation of the true gradient, $\nabla \L(w)$, is expensive and additionally, the computation of the true Hessian, $\nabla^2 \L(w)$, is not practical. At each iteration of learning, SGD methods use a small random sample of data, $J_k$, to compute an approximate of the gradient of the objective function, $\nabla^{(J_k)} \L(w_k)$ and use the opposite of that vector as the search direction, $p_k = - \nabla^{(J_k)} \L(w_k)$. The computational cost-per-iteration of SGD algorithm is small, making them the most widely used optimization method for the vast majority of deep learning and deep RL applications. 

However, these methods require fine-tuning many hyperparameters, including the learning rates $\alpha_k$. The learning rates are usually chosen to be very small to decrease the undesirable effect of the noisy stochastic gradient. Therefore, deep RL methods based on the SGD algorithms require storing a large memory of the recent experiences into a \emph{experience replay memory} $\D$ and replaying this memory repeatedly. Another major drawback of the SGD methods is that they struggle with saddle-points and the problem ill-conditioning that occur in most of the nonconvex optimization and has the undesirable effect on the model's generalization of learning \citep{BotCN18}. 

On the other hand, using second-order curvature information can help with more robust convergence for nonconvex optimization problems \citep{Nocedal-Wright:2006:Numerical-Optimization-Book,BotCN18}. An example of a second-order method is Newton's method where the Hessian matrix, $\nabla^2 \L(w)$ and the gradient are used to find the search direction, $p_k = -\nabla^2 \L(w_k)^{-1} \nabla \L(w_k)$, and then line-search method is used to find the step length along the search direction. The main bottleneck in second-order methods is the serious computational challenges involved in computing the Hessian, $\nabla^2 \L(w)$, for deep reinforcement learning problems, which is not practical when $n$ is large. Quasi-Newton methods and Hessian-free methods both use approaches to approximate the Hessian matrix without computing and storing the true Hessian matrix, $\nabla^2 \L(w)$.  

Quasi-Newton methods form an alternative class of first-order methods for solving the large-scale nonconvex optimization problem in deep learning \citep{Nocedal-Wright:2006:Numerical-Optimization-Book,Erway-etal-2018-arXiv,Rafati-et-al:2018:EUSIPCO}. These methods, like SGD, require only computing the first-order gradient of the objective function. By measuring and storing the difference between consecutive gradients, quasi-Newton methods construct \emph{quasi-Newton matrices}, 
$\{B_k\}$, which are low-rank updates to the previous Hessian approximations for 
estimating $\nabla^2 \L(w_k)$ at each iteration. They build a quadratic model of the objective function by using these quasi-Newton matrices and use that model to find a sequence of search directions that can result in superlinear convergence. Since these methods do not require the second-order derivatives, they are more efficient than Newton's method for large-scale optimization problems \citep{Nocedal-Wright:2006:Numerical-Optimization-Book}. 

There are various quasi-Newton methods proposed in literature. They differ in how they define and construct the quasi-Newton matrices $\{B_k\}$, how the search directions are computed, and how the parameters of the model are updated. The Broyden-Fletcher-Goldfarb-Shanno (BFGS) method  \citep{Bro70,Fle70,Gol70,Sha70} is considered the most popular quasi-Newton algorithm, which produces positive semidefinite matrix $B_k$ for each iteration.

The \emph{Limited-memory} BFGS (L-BFGS) method  constructs a sequence of  low-rank updates to the Hessian  approximation and consequently solving $p_k = B_k^{-1}\nabla \L(w_k) $ 
can be done efficiently. Methods based on L-BFGS quasi-Newton have been implemented and employed for the image classification task in the deep learning framework and impressive results have been produced \citep{Rafati-et-al:2018:EUSIPCO,Rafati-Marcia:2018:ICMLA,Berahas2016multibatch,Le:2011:BFGS-and-CG}.

These methods approximate second derivative information, improving the quality of each training iteration and circumventing the need for application-specific parameter tuning. Given that quasi-Newton methods are efficient in supervised learning problems \citep{BotCN18}, an important question arises: Is it also possible to use  quasi-Newton methods to learn the state representations in deep reinforcement learning successfully? We will investigate this question in the remainder of this paper.

In this paper, we implement a limited-memory BFGS (L-BFGS) optimization method  for deep reinforcement learning framework. Our deep L-BFGS Q-learning method is designed to be efficient for parallel computation in GPU. We experiment our algorithm on a subset of the ATARI 2600 games, by comparing its ability to learn robust representations of the state-action value function, as well as computation and memory efficiency. We also analyze the convergence properties of Q-learning combined with
deep neural network using L-BFGS optimization.

\section{Optimization Problems in RL}
\label{sec:qnrl-rl}
In an RL problem, the agent should implement a policy, $\pi$, from states, $\mathcal{S}$, to possible actions, $\mathcal{A}$, to maximize its expected return from the environment~\citep{RL-Book:Sutton:Barto:2017}. At each cycle of interaction, the agent receives a state, $s$, from the environment, takes an action, $a$, and one time step later, the environment sends a reward, $r \in \mathbb{R}$, and an updated state, $s'$. Each cycle of interaction, $e = (s,a,r,s')$ is called a transition \emph{experience} (or a trajectory). The goal is to find an optimal policy that maximizes the expected value of the return, i.e. the cumulative sum of future rewards, $G_t = \sum_{t'=t}^T \gamma^{t'-t} r_{t'+1}$, where $\gamma \in [0,1]$ is a discount factor, and $T$ as a final step. It is often useful to define a parametrized value function $Q(s,a;w)$ to estimate the expected value of the return. Q-learning is a Temporal Difference (model-free RL) algorithm that attempts to find the optimal value function by minimizing the loss function $L(w)$, which is defined over a recent \emph{experience memory} $\mathcal{D}$: 
\begin{align}
\min_{w \in \mathbb{R}^n} \L(w) \triangleq \frac{1}{2}\mathbb{E}_{e \sim \D} \Big[ \big( \Y - Q(s,a;w) \big)^2\Big],
\label{eq:expected-risk}
\end{align}
where $\Y = r + \max_{a'} Q(s',a';w)$ is the target value for the expected return based on the \emph{Bellman's optimality} equations \citep{RL-Book:Sutton:Barto:1998}. 

In practice, instead of minimization of the expected risk in \eqref{eq:expected-risk} we can define an optimization problem for the \emph{empirical risk} as follows
\begin{align}
\min_{w \in \mathbb{R}^n} \L (w) \triangleq \frac{1}{2|\D|}\sum_{e \in \D} \Big[ \big( \Y - Q(s,a;w) \big)^2\Big].
\label{eq:empirical-risk}
\end{align}
The most common approach for solving the empirical risk minimization problem \eqref{eq:empirical-risk} in literature is using a variant of stochastic gradient decent (SGD) method  \eqref{eq:empirical-risk}. At each optimization step $k$, a small set of experiences $J_k$ are randomly sampled from the \emph{experience replay memory} $\D$. This sample is used to compute an stochastic gradient of the objective function, $\nabla \L (w)^{J_k}$, as an approximate for the true gradient, $\nabla \L (w)$ 
\begin{align}
\nabla \L (w)^{(J_k)} \triangleq \frac{-1}{|J_k|}\sum_{e \in J_k} \Big[ \big( \Y - Q(s,a;w) \big) \nabla Q \Big].
\label{eq:stochastic-gradient}
\end{align}    
The stochastic gradient then can be used to update the iterate $w_k$ to $w_{k+1}$
\begin{align}
w_{k+1} = w_k - \alpha_k \nabla \L (w_k)^{(J_k)},
\label{eq:SGD}
\end{align}    
where $\alpha_k$ is the learning rate (step size).

\section{Line-search L-BFGS Optimization}
\label{sec:quasi-newton}
In this section, we briefly introduce a quasi-Newton optimization method based on the \emph{line-search} strategy, as an alternative for SGD methods. Then we introduce the limited-memory BFGS method. 
\subsection{Line Search Method} 
Each iteration of a line search method computes a search direction $p_k$ and then decides how far to move along that direction. The iteration is given by
\begin{align}
w_{k+1} = w_k + \alpha_k p_k, 
\label{eq:line-search}
\end{align} 
where $\alpha_k$ is called the step size. 
The search direction $p_k$ is obtained by minimizing a quadratic model of the objective function defined by 
\begin{align}
p_k = \min_{p\in \mathbb{R}^n} q_k(p) \triangleq g_k^T p + \frac{1}{2} p^T B_k p, 
\label{eq:quadratic-model2}
\end{align} 
where $g_k = \nabla \L(w_k) \in \mathbb{R}^n$ is the gradient of the objective function at $w_k$, and $B_k$ is an approximation to the Hessian matrix $\nabla^2 \L(w_k)
\in \mathbb{R}^{n \times n}$. 
If $B_k$ is a positive definite matrix, the minimizer of the quadratic function can be found as
\begin{align}
p_k = - B_k^{-1} g_k. 
\label{eq:minimizer-quadratic-model}
\end{align} 
The step size $\alpha_k$ is chosen to satisfy sufficient decrease and curvature conditions, e.g. 
the Wolfe conditions \citep{Nocedal-Wright:2006:Numerical-Optimization-Book} given by
\begin{subequations}
	\begin{align}
	\L(w_k + \alpha_k p_k) & \leq \L(w_k) + c_1 \alpha_k \nabla \L_k^T p_k, \\
	\nabla \L(w_k + \alpha_k p_k)^T p_k &\geq c_2 \nabla \L(w_k)^T p_k,
	\end{align}
	\label{eqn:Wolfe-Conditions2}
\end{subequations}
with $0 < c_1 < c_2 < 1$.
\subsection{Quasi-Newton Optimization Methods} 
Methods that use the Hessian for $B_k$, $B_k = \nabla^2 \L(w_k)$, in the quadratic model in \eqref{eq:quadratic-model2}
typically exhibit quadratic rates of convergence.  However, in large-scale problems (where $n$ and $N$ are both large), computing the true Hessian explicitly is not practical. In this case,
quasi-Newton methods are viable alternatives because they exhibit super-linear convergence rates while maintaining memory and computational efficiency. Instead of the true Hessian, quasi-Newton methods use an approximation $B_k$, which is updated after each step to take account of the additional knowledge gained during the step.

Quasi-Newton methods, like gradient descent methods, require only the computation of first-derivative information. They can construct a model of objective function by measuring the changes in the consecutive gradients for estimating the Hessian. The \emph{quasi-Newton matrices}, $\{B_k\}$, are required to satisfy the secant equation
\begin{align}
B_{k+1}  (w_{k+1}-w_k) \approx \nabla \L(w_{k+1}) - \nabla \L(w_k).
\end{align}
Typically, there are additional conditions imposed on $B_{k+1}$, such as symmetry (since the exact Hessian is symmetric), and a requirement that the update to obtain $B_{k+1}$ from $B_k$ is low rank, meaning that the Hessian approximations cannot change too much from one iteration to the next. Quasi-Newton methods vary in how this update is defined.
\subsection{The BFGS Quasi-Newton Update}
Perhaps, the most well-known among all of the quasi-Newton methods is the
Broyden-Fletcher-Goldfarb-Shanno (BFGS) update \citep{LiuN89,Nocedal-Wright:2006:Numerical-Optimization-Book},
given by 
\begin{align}
B_{k+1}= B_k - \frac{1}{\s_k^T B_k \s_k} B_k \s_k \s_k^T B_k + \frac{1}{\y_k^T \s_k} \y_k \y_k^T,
\label{eqn:bfgs2}
\end{align}
where $\s_k =  w_{k+1} - w_k $ and $\y_k = \nabla \L(w_{k+1}) - \nabla 
\L(w_{k})$. The matrices are defined recursively with the initial matrix, $B_0 = \lambda_{k+1} I$, where the scalar $\lambda_{k+1} > 0$. The BFGS method generates positive-definite approximations whenever the initial approximation $B_0$ is positive definite and $s_k^T y_k > 0$.   
\subsection{Limited-Memory BFGS} 
In practice, only the $m$ most-recently computed pairs $\{(\s_k, \y_k)\}$ are stored, where $m \ll n$, typically $m \le 100$ for very large problems. This approach is often referred to as \emph{limited-memory} BFGS (L-BFGS). Since we have to compute $p_k = - B_k^{-1} g_k$ at each iteration, 
we make use of the following recursive formula for $H_k = B_k^{-1}$:
\begin{align}
H_{k+1} = \Big( I - \frac{\y_k \s_k^T}{\y_k^T \s_k} \Big)H_k \Big(I - \frac{\s_k \y_k^T}{\y_k \s_k^T} \Big) + \frac{\y_k \y_k^T}{\y_k \s_k^T},
\label{eq-L-BFGS-H} 
\end{align}
where $H_0 = \gamma_{k+1} I$, and common value for $\gamma_{k+1}$ is usually chosen to be $\y_k^T \s_k/\y_k^T \y_k$ \citep{Nocedal-Wright:2006:Numerical-Optimization-Book,Rafati-Marcia:2018:ICMLA}. The \emph{L-BFGS two-loop recursion algorithm} given in Algorithm \ref{Algo:L-BFGS-two-loop-recursion} can compute $p_k = -H_k g_k$ in $4mn$ operations \citep{Nocedal-Wright:2006:Numerical-Optimization-Book}. 

\begin{algorithm}
	\begin{algorithmic}
		\State $\q \gets g_k = \nabla \L(w_k)$
		\For{$i=k-1,\dots,k-m$}
		\State $\alpha_i = \frac{\s_i^T q}{\y_i^T \s_i}$
		\State $\q \gets \q - \alpha_i \y_i$
		\EndFor
		\State $\r \gets H_0 q$ 
		\For{$i=k-1,\dots,k-m$}
		\State $\beta = \frac{\y_i^T r}{\y_i^T \s_i}$
		\State $\r \gets \r + \s_i ( \alpha_i - \beta)$
		\EndFor\\
		\Return $- \r = - H_k g_k$
	\end{algorithmic}
	\caption{L-BFGS two-loop recursion.}
	\label{Algo:L-BFGS-two-loop-recursion}
\end{algorithm}

\section{Deep L-BFGS Q Learning Method}
\label{sec:lbfgs-dqn}
In this section, we propose a novel algorithm for the optimization problem in deep Q-Learning framework, based on limited-memory BFGS method within line search strategy. This algorithm is designed to be efficient for parallel computations on a single or multiple GPU(s). Also the experience memory $\D$ is emptied after each gradient computation, hence the algorithm needs much less RAM memory. 

Inspired by \cite{Berahas2016multibatch}, we use the overlap between the consecutive multi-batch samples $O_k = J_k \cap J_{k+1}$ to compute $y_k$  as 
\begin{align}
\y_k = \nabla \L(w_{k+1})^{(O_k)} -  \nabla \L(w_{k})^{(O_k)}.
\label{eq:y-overlap}
\end{align} 
The use of overlap to compute $y_k$ has been shown to result in more robust convergence in L-BFGS since L-BFGS uses  gradient differences to update the Hessian approximations (see \citep{Berahas2016multibatch,Erway-etal-2018-arXiv}).

At each iteration of optimization we collect experiences in $\D$ up to batch size $b$ and use the entire experience memory $\D$ as the overlap of consecutive samples $O_k$. 
For computing the gradient $g_k = \nabla \L(w_k)$, we use the $k$th sample, $J_k = O_{k-1} \cup O_k$
\begin{align}
\nabla \L (w_k)^{(J_k)} = \frac{1}{2}(\nabla \L (w_k)^{(O_{k-1})} + \nabla \L (w_k)^{(O_{k})}).
\label{eq:gradient-Jk}
\end{align}    
Since $\nabla \L(w_k)^{(O_{k-1})}$ is already computed to obtain $\y_{k-1}$ in previous iteration, we only need to compute $\nabla \L^{(O_k)}(w_{k})$, given by 
\begin{align}
\nabla \L (w_k)^{(O_k)} = \frac{-1}{|\D|}\sum_{e \in D} \Big[ \big( \Y - Q(s,a;w_k) \big) \nabla Q \Big].
\label{eq:overlap-gradient}
\end{align}    
Note that in order to obtain $\y_k$, we only need to compute $\nabla \L (w_{k+1})^{(O_{k})}$ since $\nabla \L (w_k)^{(O_k)}$ is already computed when we computed the gradient in \eqref{eq:gradient-Jk}. 

The line search multi-batch L-BFGS optimization algorithm for deep Q-Leaning is provided in Algorithm \ref{Algo:DQN+L-BFGS}.
\begin{algorithm}
	\begin{algorithmic}
		\State \textbf{Inputs:} batch size $b$, L-BFGS memory $m$, exploration rate $\epsilon$ 
		\State \textbf{Initialize} experience memory $\mathcal{D} \gets \emptyset$ with capacity $b$
		\State \textbf{Initialize} $w_0$, i.e. parameters of $Q(.,.;w)$ randomly
		\State \textbf{Initialize} optimization iteration $k \gets 0$
		\For{ episode $=1,\dots,M$} 
		\State Initialize state $s \in \mathcal{S}$
		\Repeat{ for each step $t = 1,\dots,T$} 					
		\State compute $Q(s,a;w_k)$
		\State $a\gets$\texttt{EPS-GREEDY}$(Q(s,a;w_k),\epsilon)$
		\State Take action $a$
		\State Observe next state $s'$ and external reward $r$ 
		\State Store transition experience $e=\{s,a,r,s'\}$ to $\mathcal{D}$
		\State $s \gets s'$
		\Until{$s$ is terminal or intrinsic task is done}
		\If{$|\mathcal{D}| == b$}
		\State $O_k \gets \mathcal{D}$
		\State Update $w_k$ by performing \textbf{optimization step}
		\State $\mathcal{D} \gets \emptyset$
		\EndIf
		\EndFor			
	\end{algorithmic}
	\textbf{========================================}\\
	\textbf{Multi-batch line search L-BFGS Optimization step:}
	\begin{algorithmic}
		\State Compute gradient $g^{(O_k)}_k$
		\State Compute gradient $g^{(J_k)}_k \gets \frac{1}{2} g^{(O_k)}_k + \frac{1}{2} g^{(O_{k-1})}_k$
		\State Compute $p_k = - B_k^{-1} g^{(J_k)}_k$ using Algorithm \ref{Algo:L-BFGS-two-loop-recursion} 
		\State Compute $\alpha_k$ by satisfying the Wolfe Conditions \eqref{eqn:Wolfe-Conditions2} 
		\State Update iterate $w_{k+1} = w_{k} + \alpha_k p_{k}$
		\State $\s_{k} \gets w_{k+1} - w_{k}$
		\State Compute $g^{(O_k)}_{k+1} = \nabla \L(w_{k+1})^{(O_k)}$ 
		\State $\y_{k} \gets g^{(O_k)}_{k+1} - g^{(O_k)}_{k}$
		\State Store $\s_{k}$ to $S_{k}$ and $\y_{k}$ to $Y_k$ and remove  oldest pairs
		\State $k \gets k+1$
		
	\end{algorithmic}
	\caption{Line search Multi-batch L-BFGS Optimization for Deep Q Learning.}
	\label{Algo:DQN+L-BFGS}
\end{algorithm}

\section{Convergence Analysis}
\label{sec:convergence-analysis}
In this section, we present convergence analysis for the deep Q-learning with multi-batch line-search L-BFGS optimization method in Algorithm \ref{Algo:DQN+L-BFGS}. We also provide analysis for optimality of the state action value $Q(.,.;w)$ function. Then we provide comparison between computation time of deep L-BFGS Q-learning (Algorithm \ref{Algo:DQN+L-BFGS}) with DeepMind's Deep Q-learning algorithm \citep{DeepMind:Nature:2015} that uses a variant of SGD method.   
\subsection{Convergence of empirical risk}
To analyze the convergence properties of empirical risk function $\L(w)$ in \eqref{eq:empirical-risk} we assume that
\begin{subequations}
	\begin{align}
	&\L(w) \textrm{ is strongly convex, and twice differentiable}. \label{eq:assumption-1} \\
	&\forall w,~ \exists \lambda,\Lambda>0 \textrm{ such that } \lambda I \preceq \nabla ^2\L(w) \L \preceq \Lambda I, \textrm{ i.e. Hessian is bounded.} \label{eq:assumption-2}\\
	&\forall w,~\exists \eta > 0 \textrm{ such that } \| \nabla \L(w) \|^2 \leq \eta^2, \textrm{ i.e. Gradient does not explode.} \label{eq:assumption-3}
	\end{align}
	\label{eq:assumptions}
\end{subequations}

\begin{lemma}
	$\exists \lambda',\Lambda'>0$ such that $\lambda' I \preceq H_k \preceq \Lambda' I$.
	\label{lemma:1}
\end{lemma}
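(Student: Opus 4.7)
The plan is to follow the classical Byrd–Nocedal–Liu style argument for the limited-memory BFGS update, adapted so that the bounds depend only on the memory parameter $m$ and the constants $\lambda,\Lambda,\eta$ from the assumptions \eqref{eq:assumptions}, not on the iteration index $k$.

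First, I would extract curvature bounds on the pairs $(\s_j,\y_j)$. Strong convexity with parameter $\lambda$ and the upper Hessian bound $\Lambda$ applied to the mean-value form $\y_j = \bigl(\int_0^1 \nabla^2\L(w_j+\tau\s_j)\,d\tau\bigr)\s_j$ (restricted to the overlap sample, which inherits the same spectral bounds since the empirical loss is a convex combination) give
\begin{equation}
\lambda\|\s_j\|^2 \;\le\; \y_j^T\s_j \;\le\; \Lambda\|\s_j\|^2,
\qquad
\frac{\|\y_j\|^2}{\y_j^T\s_j} \;\le\; \Lambda.
\end{equation}
In particular $\y_j^T\s_j>0$, so every BFGS update is well defined and positive definite, and the initial scaling satisfies $\gamma_{k+1} = \y_k^T\s_k/\y_k^T\y_k \in [1/\Lambda,\,1/\lambda]$, giving $H_0^{(k)} = \gamma_{k+1}I$ with spectrum in $[1/\Lambda,1/\lambda]$.

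Next I would pass from $H_k$ to $B_k = H_k^{-1}$ and use the two classical identities for the BFGS update
\begin{align}
\traceop(B_{j+1}) &= \traceop(B_j) - \frac{\|B_j\s_j\|^2}{\s_j^TB_j\s_j} + \frac{\|\y_j\|^2}{\y_j^T\s_j}, \\
\det(B_{j+1}) &= \det(B_j)\,\frac{\y_j^T\s_j}{\s_j^TB_j\s_j}.
\end{align}
Because L-BFGS at iteration $k$ performs at most $m$ such updates on top of $B_0^{(k)} = (1/\gamma_{k+1})I$, iterating the trace identity gives an upper bound of the form $\traceop(B_k) \le n/\gamma_{k+1} + m\Lambda \le n\Lambda + m\Lambda$, hence the largest eigenvalue of $B_k$ is bounded above by some $M_1(n,m,\lambda,\Lambda)$. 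Combining this with the determinant identity and the curvature lower bound $\y_j^T\s_j \ge \lambda\|\s_j\|^2$, together with $\s_j^T B_j\s_j \le \|B_j\|\,\|\s_j\|^2 \le M_1\|\s_j\|^2$, one gets
\begin{equation}
\det(B_k) \;\ge\; \det\!\bigl(B_0^{(k)}\bigr)\,\Bigl(\frac{\lambda}{M_1}\Bigr)^{m}
\;\ge\; \Bigl(\frac{1}{\Lambda}\Bigr)^{n}\Bigl(\frac{\lambda}{M_1}\Bigr)^{m} \;\triangleq\; \delta > 0.
\end{equation}
Since the other $n-1$ eigenvalues of $B_k$ are at most $M_1$, the smallest eigenvalue of $B_k$ is at least $\delta/M_1^{\,n-1} \triangleq m_1 > 0$.

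Finally, inverting these eigenvalue bounds yields $\lambda' I \preceq H_k \preceq \Lambda' I$ with $\lambda' = 1/M_1$ and $\Lambda' = 1/m_1$, both depending only on $n,m,\lambda,\Lambda$ and independent of $k$. The main obstacle I anticipate is the bookkeeping inside the trace/determinant recursion: one must verify that each of the $m$ intermediate matrices produced by the L-BFGS inner loop at iterate $k$ remains uniformly bounded (so that $\s_j^T B_j\s_j \le M_1\|\s_j\|^2$ can be invoked inductively), which requires carrying the induction through all $m$ updates before closing the argument, and being careful that the curvature bounds above, derived for the \emph{full} empirical loss, continue to hold for the overlap subsample used in \eqref{eq:y-overlap}.
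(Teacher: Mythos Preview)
Your proposal is correct and is in fact considerably more detailed than the paper's own proof, which consists of a single sentence deferring to \cite{byrd2016stochastic,Berahas2016multibatch}. What you have written is precisely the classical trace/determinant argument those references contain: curvature bounds on $(\s_j,\y_j)$ from strong convexity and the Hessian upper bound, a uniform spectral window for the L-BFGS seed $H_0^{(k)}=\gamma_{k+1}I$, the trace recursion to cap $\lambda_{\max}(B_k)$, and the determinant recursion to floor $\lambda_{\min}(B_k)$, all uniform in $k$ because only $m$ updates are applied per outer iteration. The two concerns you flag at the end are the right ones. The intermediate-matrix issue is handled exactly as you suggest, since the trace bound $\traceop(B_j)\le n\Lambda+j\Lambda\le (n+m)\Lambda$ already holds at every inner step $j\le m$, so $M_1$ may be taken as $(n+m)\Lambda$ throughout the induction. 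The overlap-subsample issue is real but benign under the paper's (informal) assumption that the Hessian bounds in \eqref{eq:assumption-2} hold for each sample loss and hence for any convex combination; the paper does not spell this out, and neither do the cited references beyond assuming it.
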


\begin{proof}
	Due to the assumptions \eqref{eq:assumption-1} and \eqref{eq:assumption-2}, the eigenvalue of positive-definite matrix $H_k$ are also bounded     \citep{byrd2016stochastic,Berahas2016multibatch}.   
\end{proof}

\begin{lemma}
	Let $w^*$ be minimizer of $\L$, then for all $w$, we have $2 \lambda (\L(w) - \L(w^*) \leq \| \nabla \L(w) \|^2$. 
	\label{lemma:2}
\end{lemma}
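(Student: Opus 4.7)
The plan is to derive this statement (a Polyak--Łojasiewicz type inequality) from the strong convexity of $\L$ guaranteed by assumptions \eqref{eq:assumption-1}--\eqref{eq:assumption-2}. The lower bound $\lambda I \preceq \nabla^2 \L(w)$ together with twice differentiability is precisely $\lambda$-strong convexity, so the quadratic lower bound
\begin{equation*}
\L(y) \geq \L(w) + \nabla \L(w)^T (y-w) + \frac{\lambda}{2}\|y-w\|^2
\end{equation*}
holds for every pair $w,y \in \Re^n$. This is the only ingredient I intend to use.

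Next, I would fix an arbitrary $w$ and minimize the right-hand side of the above display over $y$. Since the expression is a strongly convex quadratic in $y$, its unconstrained minimizer is $y^\star = w - \tfrac{1}{\lambda}\nabla \L(w)$, and substitution gives
\begin{equation*}
\L(y) \geq \L(w) - \frac{1}{2\lambda}\|\nabla \L(w)\|^2 \qquad \text{for every } y \in \Re^n.
\end{equation*}

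Finally, I would specialize this to $y = w^*$, the global minimizer whose existence is guaranteed by strong convexity (the sublevel sets are bounded, so a minimizer exists and is unique). Rearranging the resulting inequality
\begin{equation*}
\L(w^*) \geq \L(w) - \frac{1}{2\lambda}\|\nabla \L(w)\|^2
\end{equation*}
yields $2\lambda\bigl(\L(w) - \L(w^*)\bigr) \leq \|\nabla \L(w)\|^2$, which is the claim.

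There is no real obstacle here; the statement is a textbook consequence of strong convexity, and the only step that requires any thought is recognizing that the natural thing to do is minimize the strong-convexity lower bound in closed form and then evaluate the resulting global lower bound at $w^*$. The upper Hessian bound $\Lambda I$ and the gradient bound from \eqref{eq:assumption-3} are not needed for this lemma; they will presumably enter in subsequent iteration-complexity arguments that invoke the conclusion of this lemma.
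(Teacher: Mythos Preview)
Your argument is correct and is in fact the more commonly seen derivation of the Polyak--\L{}ojasiewicz inequality. The paper takes a slightly different route: rather than starting from the quadratic \emph{lower} bound of strong convexity and minimizing over the free point, it invokes a Nesterov-type \emph{upper} bound valid for $\lambda$-strongly convex functions,
\[
\L(w) \;\leq\; \L(w^*) + \nabla \L(w^*)^{T}(w-w^*) + \frac{1}{2\lambda}\,\|\nabla \L(w)-\nabla \L(w^*)\|^2,
\]
and then substitutes $\nabla \L(w^*)=0$ directly. Both arguments use only the strong-convexity constant $\lambda$ and reach the same inequality in one step; the difference is purely in which equivalent characterization of strong convexity is taken as the starting point. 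Your version has the advantage of being self-contained (no external citation needed) and of making explicit why the bound is tight, via the closed-form minimizer $y^\star = w - \tfrac{1}{\lambda}\nabla \L(w)$; the paper's version is shorter once the cited inequality is granted. Your remark that assumptions \eqref{eq:assumption-2} (the upper bound $\Lambda$) and \eqref{eq:assumption-3} play no role here is accurate.
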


\begin{proof}
	For any convex function $\L$, and for any two points $w,w^*$, one can show that \citep{Nesterov:2013} 
	\begin{align}
	\begin{split}
	\L(w) \leq \L(w^*) + \nabla \L(w^*)^T (w-w^*) \\+ \frac{1}{2\lambda}\| \nabla \L(w) - \nabla \L(w^*)\|^2.
	\end{split}
	\label{eq:nestrov}
	\end{align}
	Since $w^*$ is minimizer of $\L$ then $\nabla \L(w^*)=0$ in \eqref{eq:nestrov} and we have the proof.
\end{proof}

\begin{theorem}
	Let $w_k$ be iterates generated by Algorithm \ref{Algo:DQN+L-BFGS}, and let's assume that the step length $\alpha_k$ is fixed. The upper bound for the  empirical risk offset from the true minimum value is 
	\begin{align}
	\begin{split}
	\L(w_k) - \L(w^*) \leq (1 - 2 \alpha \lambda \lambda' )^k [\L(w_0) - \L(w^*)] \\ 
	+ [1 - (1 - 2 \alpha \lambda \lambda')^k]\frac{\alpha^2 \Lambda'^2 \Lambda \eta^2}{4 \lambda' \lambda}
	\end{split}
	\label{eq:L-bound}
	\end{align}  
\end{theorem}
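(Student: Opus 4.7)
The plan is to reduce the theorem to a one-step contraction of the form $\phi_{k+1}\le\rho\,\phi_k+\beta$, where $\phi_k:=\L(w_k)-\L(w^*)$, $\rho=1-2\alpha\lambda\lambda'$, and $\beta$ is a constant capturing the residual stochastic/quadratic error; the closed-form \eqref{eq:L-bound} then falls out by unrolling this recurrence with the geometric sum $\sum_{i=0}^{k-1}\rho^i=(1-\rho^k)/(1-\rho)$ and noting that $1-\rho=2\alpha\lambda\lambda'$.

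First I would write the quadratic upper bound on $\L$ implied by the Hessian bound in \eqref{eq:assumption-2}:
\begin{equation*}
\L(w_{k+1})\;\le\;\L(w_k)+\nabla\L(w_k)^T(w_{k+1}-w_k)+\tfrac{\Lambda}{2}\|w_{k+1}-w_k\|^2.
\end{equation*}
Substituting the L-BFGS update $w_{k+1}-w_k=-\alpha H_k g_k$ (taking expectations over the mini-batch $J_k$ so that the sub-sampled gradient can be replaced by $\nabla\L(w_k)$ on the linear term) and invoking Lemma~\ref{lemma:1}, I would bound the linear and quadratic terms respectively as $\nabla\L(w_k)^T H_k\nabla\L(w_k)\ge\lambda'\,\|\nabla\L(w_k)\|^2$ and $\|H_k\nabla\L(w_k)\|^2\le\Lambda'^{2}\|\nabla\L(w_k)\|^2\le\Lambda'^{2}\eta^2$, where the last inequality uses \eqref{eq:assumption-3}. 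This produces
\begin{equation*}
\L(w_{k+1})\;\le\;\L(w_k)-\alpha\lambda'\,\|\nabla\L(w_k)\|^2+\tfrac{\alpha^2\Lambda'^{2}\Lambda\eta^2}{2}.
\end{equation*}

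Next, subtracting $\L(w^*)$ from both sides and applying Lemma~\ref{lemma:2}, i.e.\ $2\lambda\,\phi_k\le\|\nabla\L(w_k)\|^2$, converts the gradient-norm term into a contraction on $\phi_k$, giving
\begin{equation*}
\phi_{k+1}\;\le\;(1-2\alpha\lambda\lambda')\,\phi_k+\tfrac{\alpha^2\Lambda'^{2}\Lambda\eta^2}{2}.
\end{equation*}
A straightforward induction on $k$ then yields \eqref{eq:L-bound} after identifying the constant factor $\beta/(1-\rho)$ with the offset term $\alpha^2\Lambda'^{2}\Lambda\eta^2/(4\lambda\lambda')$ shown in the theorem statement.

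The main obstacle I expect is reconciling the stochastic update (with $g_k^{(J_k)}$ rather than $\nabla\L(w_k)$) with the deterministic-looking bound in \eqref{eq:L-bound}. The clean resolution is to work in expectation with respect to the sampling of $J_k$ and to exploit the overlap construction in \eqref{eq:y-overlap}--\eqref{eq:gradient-Jk}, which gives an unbiased gradient estimate, together with the uniform bound $\|\nabla\L(w)\|^2\le\eta^2$ from \eqref{eq:assumption-3} to absorb the second-moment/variance of the stochastic gradient into the quadratic term. A secondary technical point is that $\rho\in(0,1)$ requires $\alpha<1/(2\lambda\lambda')$, so I would state this explicitly as a step-size restriction; the positive-definiteness of $H_k$ guaranteed by Lemma~\ref{lemma:1} ensures that $p_k=-H_k g_k^{(J_k)}$ is indeed a descent direction so that the Wolfe conditions \eqref{eqn:Wolfe-Conditions2} can be met in the line search underpinning the fixed-$\alpha$ analysis.
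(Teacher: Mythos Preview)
Your proposal is correct and follows essentially the same route as the paper: a quadratic (Taylor) upper bound from the Hessian eigenvalue assumption, then Lemma~\ref{lemma:1} to control $H_k$, then Lemma~\ref{lemma:2} to turn $\|\nabla\L(w_k)\|^2$ into $\L(w_k)-\L(w^*)$, and finally unrolling the linear recurrence. One minor bookkeeping point: your per-step residual $\beta=\tfrac{\alpha^2\Lambda'^{2}\Lambda\eta^2}{2}$ divided by $1-\rho=2\alpha\lambda\lambda'$ actually yields $\tfrac{\alpha\,\Lambda'^{2}\Lambda\eta^2}{4\lambda\lambda'}$ (one power of $\alpha$, not two); the paper's proof has the same inconsistency with the stated constant in \eqref{eq:L-bound}, so this is a typo in the theorem rather than a flaw in your argument.
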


\begin{proof}
	By using Taylor expansion on 
	\[
	\L(w_{k+1}) = \L(w_k - \alpha_k H \nabla \L(w_k)) 
	\]
	around $w_k$ we can have
	\begin{align}
	\begin{split}
	\L(w_{k+1}) \leq \L(w_{k}) - \alpha_k \nabla \L(w_k)^T H_k \nabla \L(w_k) \\+ \frac{\Lambda}{2} \|  \alpha_k \nabla \L(w_k)^T H_k \nabla \L(w_k) \|^2.
	\end{split}
	\end{align}
	By applying assumptions \eqref{eq:assumptions} and Lemma \ref{lemma:1} and \ref{lemma:2} to above expression, we have
	\begin{align}
	\begin{split}
	&\L(w_{k+1}) \leq \L(w_{k}) \\&- 2 \alpha_k \lambda' \lambda [\L(w_k) - \L(w^*)] + \frac{\alpha_k^2 \Lambda'^2 \Lambda \eta^2}{4 \lambda' \lambda}
	\end{split}
	\end{align}
	By rearranging above expression and recursion over $k$ we have the proof. For more detailed proof see \cite{byrd2016stochastic,Berahas2016multibatch}.    
\end{proof}
If the step size is bounded $\alpha\in(0,1/2\lambda\lambda')$, we can conclude that the first term of the bound given in \eqref{eq:L-bound} is decaying linearly to zero when $k \to \infty$ and the constant residual term $\frac{\alpha^2 \Lambda'^2 \Lambda \eta^2}{4 \lambda' \lambda}$ is the neighborhood of convergence.

\subsection{Value Optimality}
The Q-learning method is proved to converge to the optimal value function if the step sizes satisfies $\sum_{k}\alpha_k = \infty$ and $\sum_{k}\alpha_k^2 < \infty$ \citep{Jaakola:1994:q-learning-convergence}. 
Now we want to prove that the Q-learning using the L-BFGS update also theoretically converges to the optimal value function under extra condition on the step length $\alpha_k$.
\begin{theorem}
	Let $Q^*$ be the optimal state-action value function and $Q_{k}$ is the Q-function with parameters $w_{k}$. Furthermore, assume that the gradient of $Q$ is bounded $\| \nabla Q \|^2 \leq \eta''^2$ and Hessian of $Q$ functions satisfy $\lambda''\preceq \nabla^2 Q \preceq \Lambda''$. We have
	\begin{align}
	\begin{split}
	&\| Q_{k+1} -  Q^* \|_{\infty} < \\ & \prod_{j=0}^k \big[1 - \alpha_j \eta''^2 \lambda + \frac{\alpha_j \eta''^2 \Lambda'^2 \Lambda''}{2}\big]^{k} \| Q_{0} -  Q^* \|_{\infty}.
	\end{split}
	\label{eq:value-bound}
	\end{align}
	If step size $\alpha_k$ satisfies
	\begin{align}
	\big|1 - \alpha_k \eta''^2 \lambda + \frac{\alpha_k \eta\eta' \Lambda'^2 \Lambda''}{2}\big|<1,~\forall k,
	\label{eq:alpha-cond-value-bound}
	\end{align}
	$Q(.,.;w_k)$ ultimately will converge to $Q^*$, when $k \to \infty$. 
	\label{theorem:value-optimality}
\end{theorem}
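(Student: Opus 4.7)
The plan is to establish a one-step contraction of the form
$\|Q_{k+1} - Q^*\|_\infty \le \big[1 - \alpha_k \eta''^2 \lambda + \tfrac{\alpha_k \eta''^2 \Lambda'^2 \Lambda''}{2}\big]\|Q_k - Q^*\|_\infty$
at each iteration, and then iterate this inequality from $j=0$ up to $j=k$ to obtain the product bound \eqref{eq:value-bound}. Under the step-size condition \eqref{eq:alpha-cond-value-bound} every factor in the product has absolute value strictly less than one, so the product decays to zero as $k \to \infty$ and $Q(\cdot,\cdot;w_k) \to Q^*$.

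To obtain the one-step bound, I would fix a state--action pair $(s,a)$ and apply a second-order Taylor expansion of $Q(s,a;\cdot)$ around $w_k$, substituting the L-BFGS update $w_{k+1} - w_k = -\alpha_k H_k g_k$:
\[
Q(s,a;w_{k+1}) - Q(s,a;w_k) = -\alpha_k (\nabla Q)^T H_k g_k + \tfrac{\alpha_k^2}{2}(H_k g_k)^T \nabla^2 Q(s,a;\bar w)(H_k g_k).
\]
Adding and subtracting $Q^*(s,a)$ and taking absolute values expresses $|Q_{k+1}(s,a) - Q^*(s,a)|$ as $|Q_k(s,a) - Q^*(s,a)|$ plus a first-order term and a quadratic remainder. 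For the remainder, the curvature bound $\nabla^2 Q \preceq \Lambda'' I$, Lemma~\ref{lemma:1} giving $\|H_k\|\le \Lambda'$, and $\|g_k\|\le \eta''\|Q_k - Q^*\|_\infty$ (inherited from the structure $g_k = -|\D|^{-1}\sum(\Y - Q)\nabla Q$) combine to produce the additive contribution $\tfrac{\alpha_k^2 \eta''^2 \Lambda'^2 \Lambda''}{2}\|Q_k - Q^*\|_\infty$. For the first-order term, I would combine the lower bound $H_k \succeq \lambda' I$ from Lemma~\ref{lemma:1} with strong convexity of $\L$ (Assumption~\eqref{eq:assumption-1} with modulus $\lambda$, used in the form of Lemma~\ref{lemma:2}) and $\|\nabla Q\|^2 \le \eta''^2$ to extract a negative contribution of $-\alpha_k \eta''^2 \lambda\, |Q_k(s,a) - Q^*(s,a)|$. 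Passing to $\sup_{(s,a)}$ produces the claimed one-step inequality, and iterating over $j$ yields the telescoped product bound.

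The main obstacle lies in connecting the loss-gradient $g_k = \nabla \L(w_k)$, which is an aggregate of temporal-difference residuals $\Y - Q$ with \emph{bootstrapped} targets $\Y = r + \max_{a'} Q(s',a';w_k)$, to the pointwise value gap $Q_k(s,a) - Q^*(s,a)$ that drives the contraction. The natural tool is the strong-convexity inequality $2\lambda[\L(w) - \L(w^*)] \le \|\nabla \L(w)\|^2$ of Lemma~\ref{lemma:2}, but $\L(w^*)$ is the minimum empirical loss rather than zero, and the target $\Y$ depends on the current iterate $w_k$ rather than on $Q^*$. Closing this gap requires either assuming that the network is expressive enough to drive the Bellman residual essentially to zero, or invoking a separate $\gamma$-contraction of the Bellman operator so that the bootstrapped targets track $Q^*$ as $k \to \infty$. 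Once this identification is in place, collecting the signed coefficients into the single factor $1 - \alpha_k \eta''^2 \lambda + \tfrac{\alpha_k \eta''^2 \Lambda'^2 \Lambda''}{2}$ and taking products completes the argument.
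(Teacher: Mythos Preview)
Your overall scaffold --- second-order Taylor expansion of $Q(s,a;\cdot)$ along the L-BFGS step and then recursion over $k$ --- matches the paper, but the way you try to extract the contraction factor is genuinely different from what the paper does, and the difference is exactly at the point you flag as the ``main obstacle.''

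The paper does \emph{not} invoke Lemma~\ref{lemma:2} or strong convexity of $\L$ anywhere in this proof. Instead, it removes the bootstrapping issue by fiat: it replaces the full gradient with a single-experience surrogate in which the target is $Q^*$ rather than the bootstrapped $\Y$, writing
\[
\nabla \L(w_k)\;\approx\;\big(Q(s,a;w_k)-Q^*(s,a)\big)\,\nabla Q(s,a;w_k).
\]
With this substitution, the scalar $(Q_k-Q^*)$ factors out of both the first-order and second-order Taylor terms directly, giving
\[
Q_{k+1}-Q^* \;=\; (Q_k-Q^*)\Big[1-\alpha_k\,\nabla Q_k^{T}H_k\nabla Q_k+\tfrac{\alpha_k^2}{2}\,\nabla Q_k^{T}H_k\nabla^2 Q(\xi_k)H_k\nabla \L_k\Big],
\]
and the eigenvalue bounds on $H_k$, $\nabla^2 Q$, and $\|\nabla Q\|$ then yield the bracketed constant in \eqref{eq:value-bound} and the step-size condition \eqref{eq:alpha-cond-value-bound}. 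No appeal to $\L(w^*)$, to Lemma~\ref{lemma:2}, or to a separate $\gamma$-contraction of the Bellman operator is made.

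So your proposed route --- bounding $\|g_k\|$ by $\eta''\|Q_k-Q^*\|_\infty$ via the structure of the empirical gradient, and then using strong convexity to produce the negative first-order term --- is more elaborate than what the paper actually does, and the gap you worry about (bootstrapped $\Y$ versus $Q^*$) is one the paper simply assumes away through the one-sample approximation above. If you adopt that same approximation, the factoring is immediate and Lemma~\ref{lemma:2} is unnecessary; conversely, if you insist on working with the true aggregate gradient, the argument does not close with the tools in the paper.
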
  

\begin{proof}
	First we derive the effect of parameter update from $w_k$ to 
	\[
	w_{k+1} = w_k - \alpha_k H_k \nabla \L (w_k)
	\]
	on the optimality neighbor.       
	\begin{align}
	\| Q_{k+1} -  Q^* \|_{\infty} \triangleq \max_{s,a} \big| Q(s,a,w_{k+1}) - Q^*(s,a) \big|
	\end{align}
	We approximate the gradient using only one experience $(s,a,r,s')$,
	\begin{align}
	\nabla \L(w_k) \approx \big(Q(s,a;w_k) - Q^*(s,a;w_k) \big)\nabla Q_k(s,a;w_k),
	\label{eq:one-ex-grad}
	\end{align}
	We use the Taylor's expansion to approximate $Q(s,a,w_{k+1})$ 
	\begin{align}
	\begin{split}
	&Q(s,a;w_{k+1}) = Q(s,a;w_{k} - \alpha_k H_k \nabla \L(w_k)) \\
	&= Q(s,a;w_k) -  \alpha_k  \nabla \L_k^T H_k \nabla Q_k + \frac{\alpha_k^2}{2}  \nabla \L_k^T H_k \nabla^2 Q(\xi_k) H_k \nabla \L_k^T \\
	&= Q_k -  \alpha_k  (Q_k-Q^*) \nabla Q_k^T H_k \nabla Q_k + \frac{\alpha_k^2}{2} (Q_k-Q^*) \nabla Q_k^T H_k \nabla^2 Q(\xi_k) H_k \nabla \L_k^T,
	\end{split}
	\end{align}
	where $w_k<\xi_k<w_{k+1}$, $Q_k \coloneqq Q(s,a;w_k)$, $\nabla Q_k \coloneqq \nabla Q(s,a;w_k)$, and $\nabla \L_k \coloneqq \nabla \L(w_k)$. We can use the above expression to compute $\| Q_{k+1} -  Q^* \|_{\infty}$
	\begin{align}
	\begin{split}
	&\| Q_{k+1} -  Q^* \|_{\infty}= \\ 
	&\max_{s,a} \Big| (Q_k-Q^*)  {\Big[1 -\alpha_k \nabla Q_k^T H_k \nabla Q_k + \frac{\alpha_k^2}{2} \nabla Q_k^T H_k \nabla^2 Q(\xi_k) H_k \nabla \L_k\Big] \Big|_{(s,a)}}.
	\end{split}
	\label{eq:q-optimal-2}
	\end{align}
	If $\alpha_k$ satisfies
	\begin{align}
	\Big|1 -\alpha_k \nabla Q_k^T H_k \nabla Q_k +\frac{\alpha_k^2}{2} \nabla Q_k^T H_k \nabla^2 Q_k H_k \nabla \L_k\Big| < 1,
	\label{eq:cond-alpha-conv}
	\end{align}
	then 
	\begin{align}
	\| Q_{k+1} -  Q^* \|_{\infty} < \| Q_{k} -  Q^* \|_{\infty}. 
	\label{eq:q-optimal-3}
	\end{align} 
	Therefore, $Q_k$ converges to $Q^*$ when $k \to \infty$. Considering our assumptions on the bounds of the eigenvalues of $\nabla^2 Q_k$ and $H_k$, we can derive \eqref{eq:alpha-cond-value-bound} from \eqref{eq:cond-alpha-conv}. Recursion on \eqref{eq:q-optimal-2} from $k=0$ to $k+1$ results in \eqref{eq:value-bound}.  
\end{proof}     
\subsection{Computation time}
Let us compare the cost of deep L-BFGS Q-learning in Algorithm \ref{Algo:DQN+L-BFGS} with DQN algorithm in \citep{DeepMind:Nature:2015} that uses a variant of SGD. Assume that the cost of computing gradient is $\O(bn)$ where $b$ is the batch size. The real cost is probably less than this due to the parallel computation in GPU. Let's assume that we run both algorithm for $L$ steps. We update the weights with frequency of every $b$ steps, hence there is $L/b$ maximum updates in our algorithm. The SGD batch size in \citep{DeepMind:Nature:2015} $b_s$ is smaller than $b$ but the frequency of the update is high $f \ll b$. Each iteration of L-BFGS algorithm update consists of the cost of computing the gradient $g_k^{(O_k)}$ which is $bn$, cost of computing the search step $p_k = -H_k g_k^{(O_k)}$ using L-BFGS two-loop recursion (Algorithm \ref{Algo:L-BFGS-two-loop-recursion}) which is $4mn$, and the cost of satisfying the Wolfe conditions \eqref{eqn:Wolfe-Conditions2}, to find step size where most of the times automatically satisfies for $\alpha=1$ and in some steps require recomputing gradient for $z$ times. Therefore we have
\begin{align}
\begin{split}
\frac{\textrm{Cost of Algorithm \ref{Algo:L-BFGS-two-loop-recursion}}}{\textrm{Cost of DQN \citep{DeepMind:Nature:2015}}} &= \frac{(L/b)(zbn + 4mn)}{(L/f)(b_s n)}\\
&= \frac{fz}{b_s} + \frac{4fm}{b b_s}.
\end{split}
\end{align}
In our algorithm, we use quite large batch size to compute less noisy gradients. With $b = 2048$, $b_s = 32$, $f = 4$, $z=5$, $m=20$, the runtime cost ratio will be around $0.63 < 1$. Although per-iteration cost of SGD algorithm is lower than L-BFGS, but the total training time of our algorithm is less than DQN \citep{DeepMind:Nature:2015} for same number of RL steps due to less frequent updates in L-BFGS method.                

\section{Experiments on ATARI 2600 Games}
\label{sec:experiment}
We performed experiments using Algorithm \ref{Algo:DQN+L-BFGS} on six ATARI 2600 games -- Beam Rider, Breakout, Enduro, Q*bert, Seaquest, and Space Invaders. We used OpenAI's gym ATARI environments \citep{OpenAI} which is a wrapper on Arcade Learning Environment emulator \citep{Bellemare:2013:ALE}. These games have been used by other researchers with different learning methods \citep{Bellemare:2012:Contingency,Bellemare:2013:ALE,Hausknecht:2014:HNeat-ATARI,DeepMind:Nature:2015,Schulman:2015:TRPO-ATARI}, and hence they serve as benchmark environments for evaluation of deep reinforcement learning algorithms. 

We used the DeepMind's Deep Q-Network (DQN) architecture in \cite{DeepMind:Nature:2015} as a function approximator for $Q(s,a;w)$. The same architecture was used to train the different ATARI games. The raw Atari frames, which are $210 \times 160$ pixel images with a 128 color palette are preprocessed by first converting their RGB representation to gray-scale and then down-sampling it to a $110\times84$ image. The final input representation is obtained by cropping an $84 \times 84$ region of the image that roughly captures the playing area. The stack of the last 4 consecutive frames was used to produce the input of size  $(4 \times 84 \times 84)$ to the $Q$-function. The first hidden layer of the network consists of a $32$ convolutional filters of size $8 \times 8$ with stride $4$, followed by a Rectified Linear Unit (ReLU) for nonlinearity. The second hidden layer consists of $64$ convolutional filters of size $4 \times 4$ with stride 2, followed by a ReLU function. The third layer consists of 512 of fully-connected linear units followed by ReLU. The output layer is a fully-connected linear layer with a output, $Q(s,a_i,w)$, for each valid joystick action, $a_i \in \mathcal{A}$.

We only used $2000\times 1024$ training steps for training the network on each game (instead of $50$ million steps that was used originally in \cite{DeepMind:Nature:2015}). The training was stopped if the norm of gradient, $\| g_k \|$, was less than a threshold. We used $\epsilon$-greedy for exploration strategy, and similar to  \cite{DeepMind:Nature:2015}, the exploration rare, $\epsilon$, annealed linearly from $1$ to $0.1$. 

Every 10,000 steps, the performance of the learning algorithm was tested by freezing the Q-network's parameters. During the test time, We used $\epsilon=0.05$. The greedy action, $\max_a Q(s,a;w)$, was chosen by the Q-network $95\%$ of the times and there was $5\%$ randomness similar to the DeepMind's implementation in \cite{DeepMind:Nature:2015}.      

Inspired by \cite{DeepMind:Nature:2015}, we also used separate networks to compute the target values, $\Y = r + \gamma \max_{a'} Q(s',a',w_{k-1})$, which was essentially the network with parameters in previous iterate. After each iteration of the multi-batch line search L-BFGS, $w_k$ was updated to $w_{k+1}$, and the target network's parameter $w_{k-1}$ was updated to $w_k$.

Our optimization method was different than DeepMind's RMSProp method used in \cite{DeepMind:Nature:2015} (which is a variant of SGD). We used  stochastic line search L-BFGS method as the optimization method (Algorithm \ref{Algo:DQN+L-BFGS}). There are few important differences between our implementation of deep reinforcement learning method in comparison to the DeepMind's DQN algorithm in \cite{DeepMind:Nature:2015}. 

We used a quite large batch size $b$ in comparison to \cite{DeepMind:Nature:2015}. We experimented our algorithm with different batch sizes $b\in$ \{512, 1024, 2048, 4096, 8192\}. The experience memory $\D$ had a capacity of $b$ also. We used one NVIDIA Tesla K40 GPU with 12GB GDDR5 RAM. The entire experience memory $\D$ could fit in the GPU RAM with a batch size of $b \leq 8192$. 

After every $b$ steps of interaction with the environment, the optimization step in Algorithm \ref{Algo:DQN+L-BFGS} was ran. We used the entire experience memory, $\D$, for the overlap, $O_k$, between two consecutive samples, $J_k$ and $J_{k+1}$, to compute the gradient in \eqref{eq:overlap-gradient} as well as $\y_k$ in \eqref{eq:y-overlap}. Although the DeepMind's DQN algorithm in \cite{DeepMind:Nature:2015} is using smaller batch size of $32$, but the frequency of optimization step is high (every $4$ steps). We hypothesize that using the smaller batch size make the computation of the gradient too noisy, and also doesn't save significant computational time, since the overhead of data transfer between GPU and CPU is more costly than the computation of the gradient on a bigger batch size, due to the power of parallelism in GPU. Once the overlap gradient, $g_k^{(O_k)}$, was computed, we computed the gradient $g_k^{(J_k)}$ for the current sample, $J_k$, in \eqref{eq:gradient-Jk} by memorizing and using the gradient information from the previous optimization step. Then the L-BFGS two loop-recursion in Algorithm \ref{Algo:L-BFGS-two-loop-recursion} was used to compute the search direction $p_k = - H_k g_k^{(J_k)}$.

After finding the quasi-Newton decent direction, $p_k$, the Wolfe Condition \eqref{eqn:Wolfe-Conditions2} was applied to compute the step size, $\alpha_k \in [0.1,1]$, by satisfying the sufficient decrease and the curvature conditions \citep{Wolfe1969,Nocedal-Wright:2006:Numerical-Optimization-Book}. In most of the optimization steps, either the step size of $\alpha_k=1$ satisfied the Wolfe conditions in \eqref{eqn:Wolfe-Conditions2}, or the line search algorithm iteratively used smaller $\alpha_k$ until it satisfied the Wolfe conditions or reached to a lower bound of $0.1$. The original DQN algorithm used a small fixed learning rate of $0.00025$ to avoid the execrable drawback of the noisy stochastic gradient decent step which makes the learning to be very slow.  

The vectors $\s_k = w_{k+1} - w_k$ and $\y_k = g^{(O_k)}_{k+1} - g^{(O_k)}_{k}$ was only added to the recent collections $S_k$ and $Y_k$ only if $\s_k^T \y_k > 0$ and not close to zero. We applied this condition  to \emph{cautiously} preserve the positive definiteness of the L-BFGS matrices $B_k$. Only the $m$ recent $\{(\s_i,\y_i)\}$ pairs were stored into $S_k$ and $Y_k$ ($|S_k| = m$ and $|Y_k|=m$) and the older pairs were removed from the collections. We experimented our algorithm with different L-BFGS memory sizes $m \in \{20,40,80\}$. 

All code is implemented in Python language using Pytorch, NumPy and SciPy libraries and is available at \url{http://rafati.net/quasi-newton-rl}.

\section{Results and Discussions}
\label{sec:discussions}
The average of the maximum game scores is reported in Figure \ref{fig:train-mean-std} (a). The error bar in Figure \ref{fig:train-mean-std} (a) is the standard deviation for the simulations with different batch size, $b \in \{512,1024,2048,4096\}$, and different L-BFGS memory size, $m \in \{20,40,80\}$, for each ATARI game (total of 12 simulations per each task). All simulations regardless of the batch size, $b$, and the memory size, $m$, performed a robust learning. The average training time for each task along with the Squared Temporal Difference (STD) error is shown in Figure \ref{fig:train-mean-std} (b). We did not find a correlation between the training time versus the different batch size, $b$, or the different L-BFGS memory size, $m$. In most of the simulations, the STD error for the training time as shown in Figure \ref{fig:train-mean-std} (b) was not significant.

\begin{figure}[hbt!]
	\centering
	\begin{tabular}[t]{cc}
		(a)&(b)\\
		\includegraphics[width=.44\textwidth]{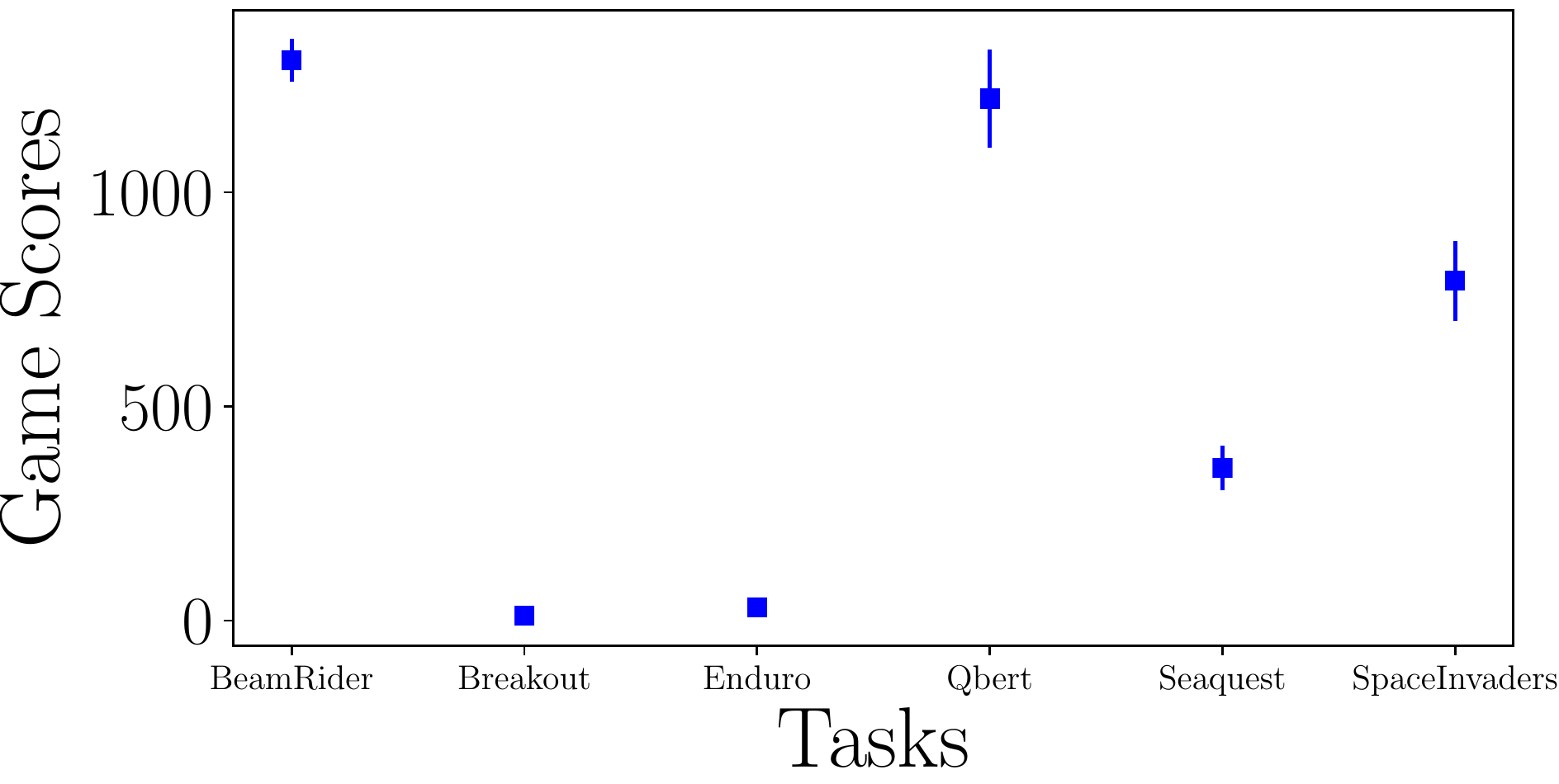}
		&
		\includegraphics[width=.44\textwidth]{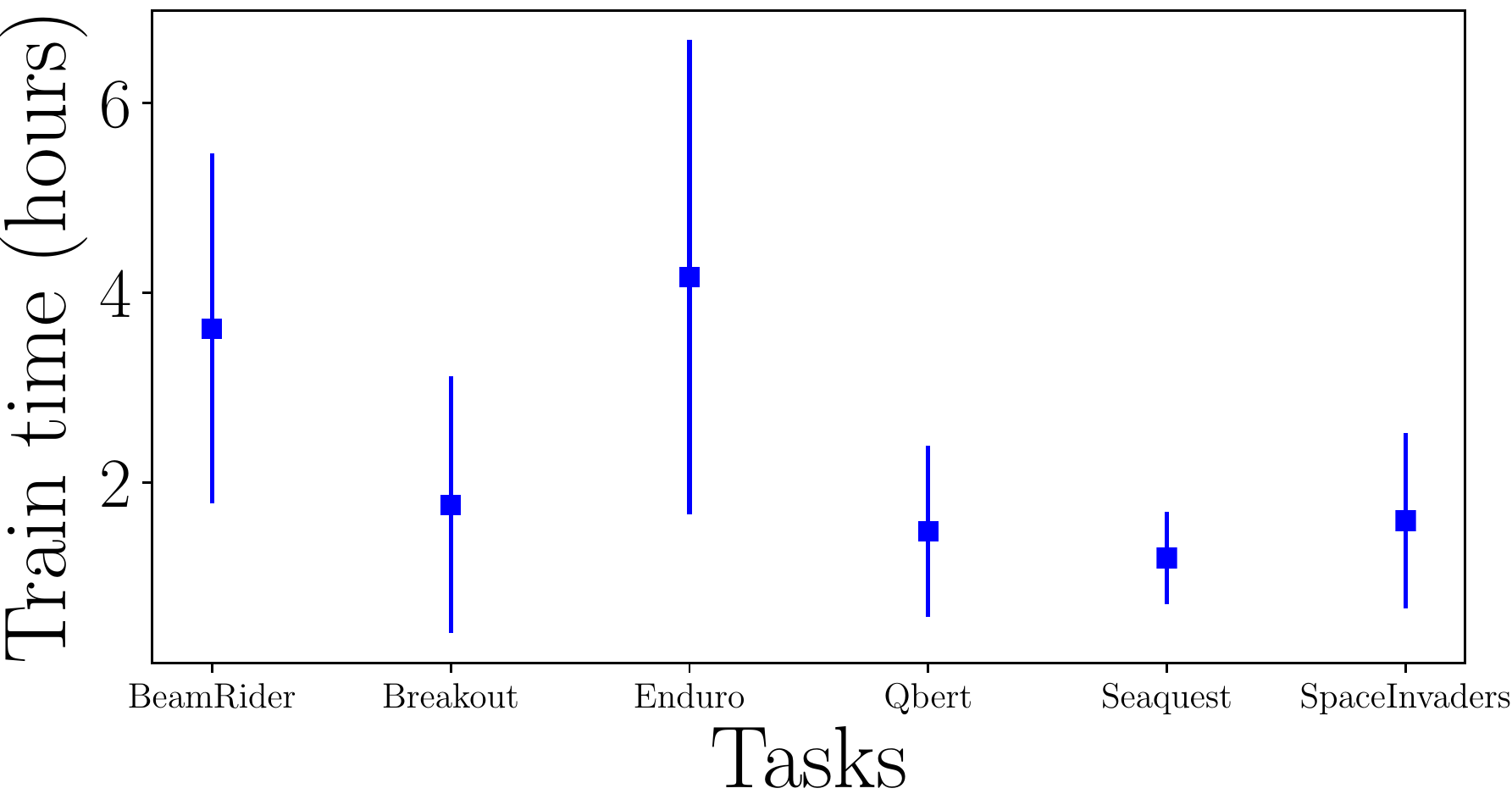}	
	\end{tabular}
	\caption{(a) Test scores (b) Total training time for ATARI games.}
	\label{fig:train-mean-std}
\end{figure}

The test scores and the training loss, $\L_k$, for the six ATARI 2600 environments is shown in Figure \ref{fig:score-time-ATARI-Games} using the batch size of $b=2048$ and L-BFGS memory size $m=40$.

\begin{figure*}[hbt!]
	\centering
	\begin{tabular}{ccc} 
		(a) & (b) & (c) \\
		\includegraphics[width=.3\textwidth]{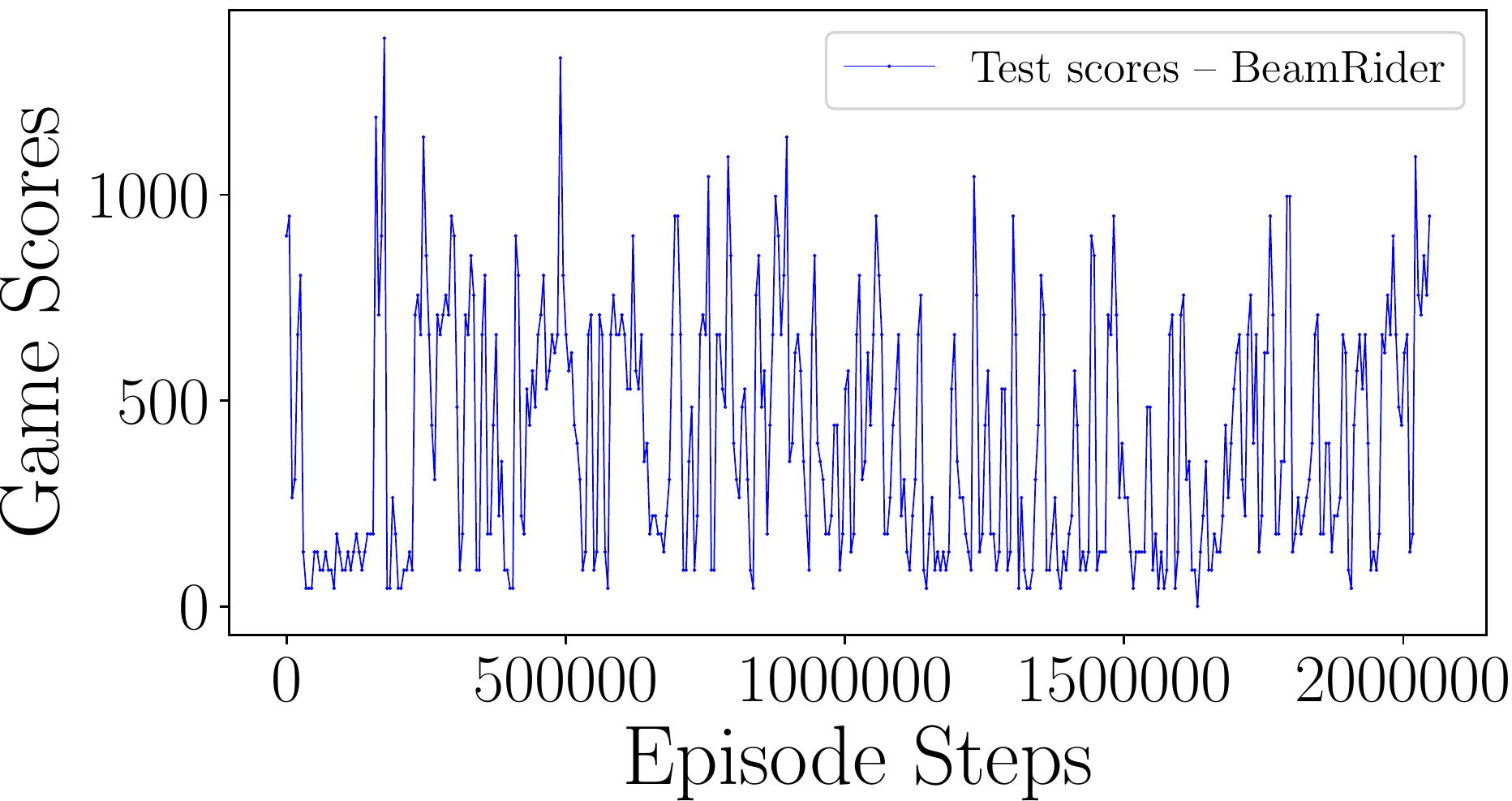} & 
		\includegraphics[width=.3\textwidth]{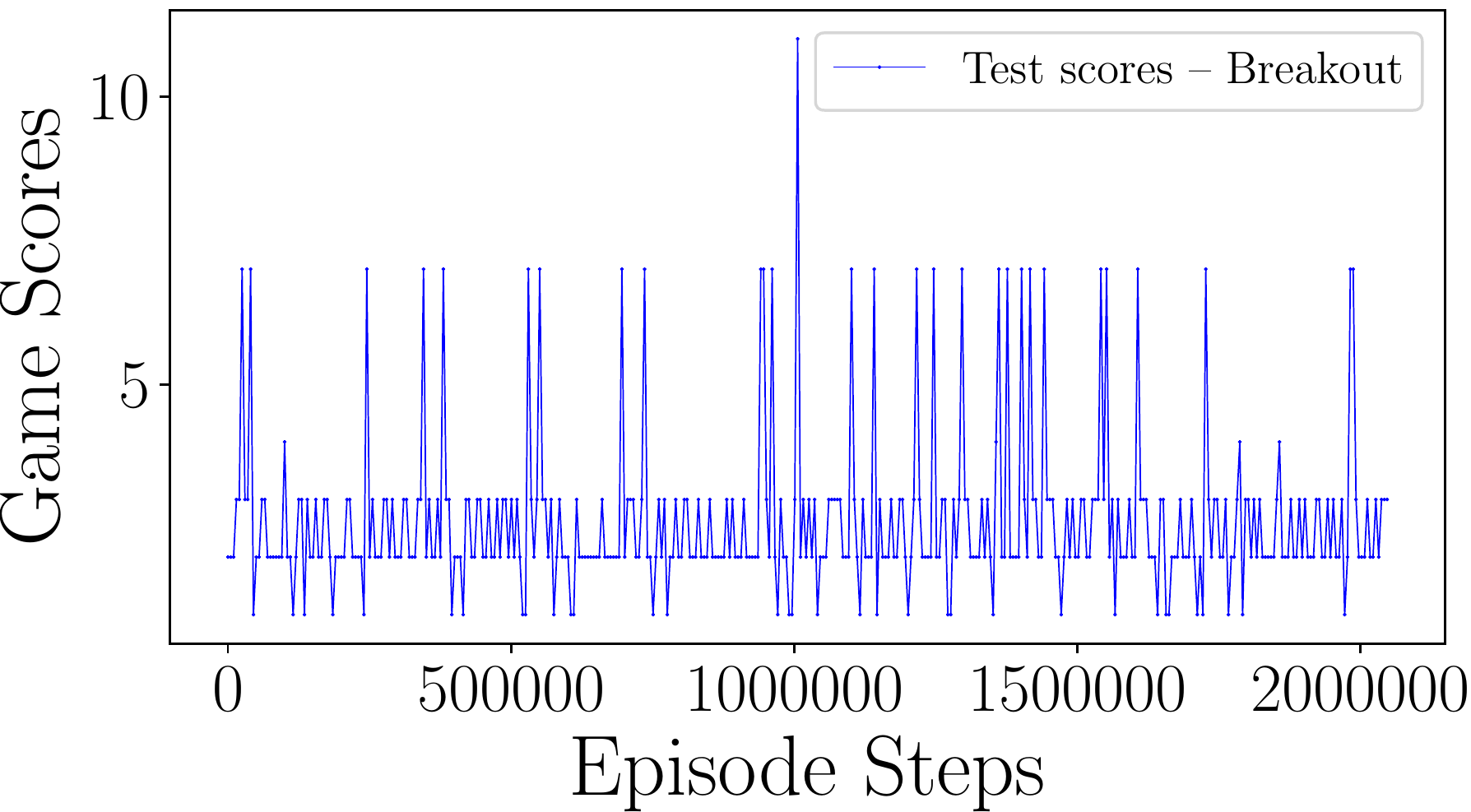} & 	\includegraphics[width=.3\textwidth]{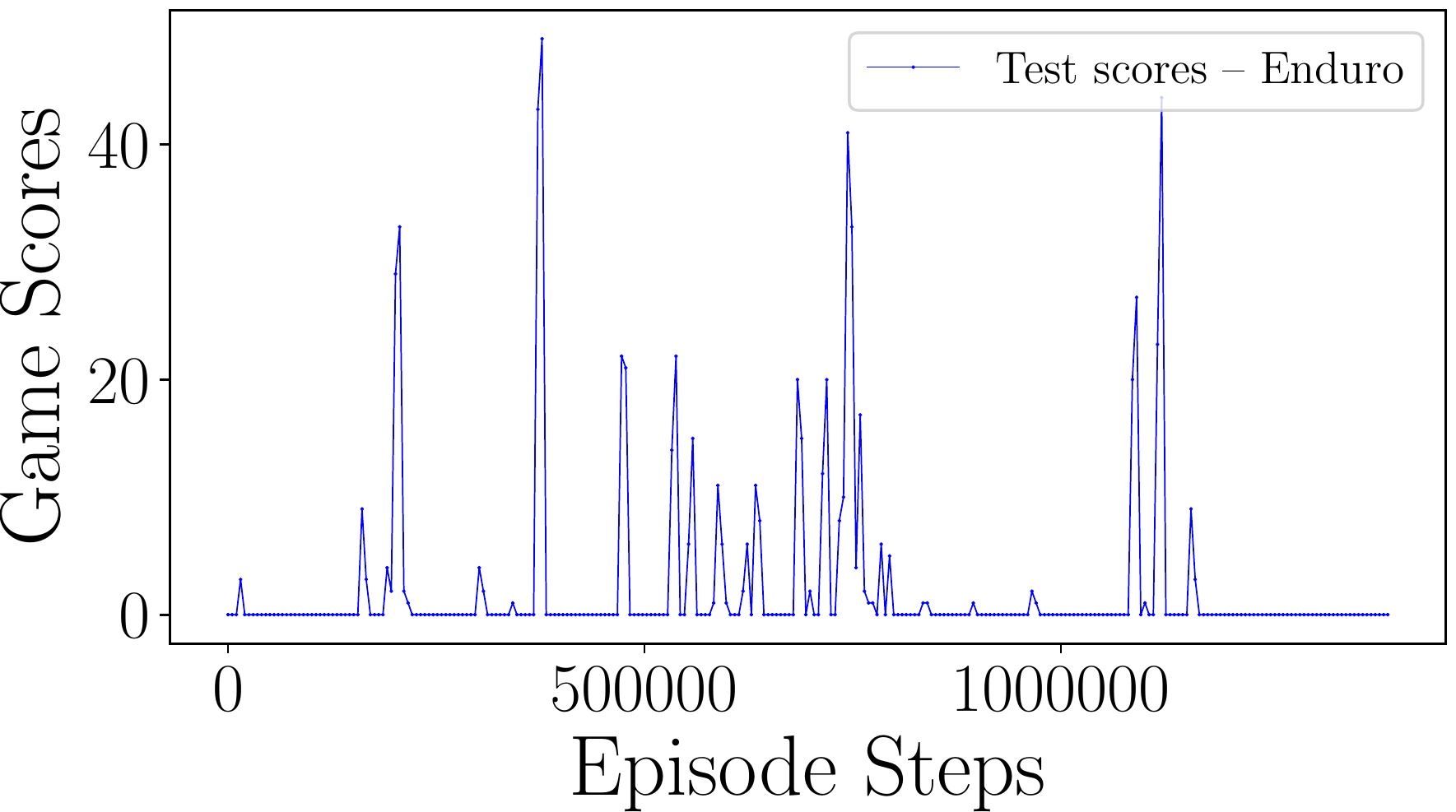} \\
		(d) & (e) & (f) \\
		\includegraphics[width=.3\textwidth]{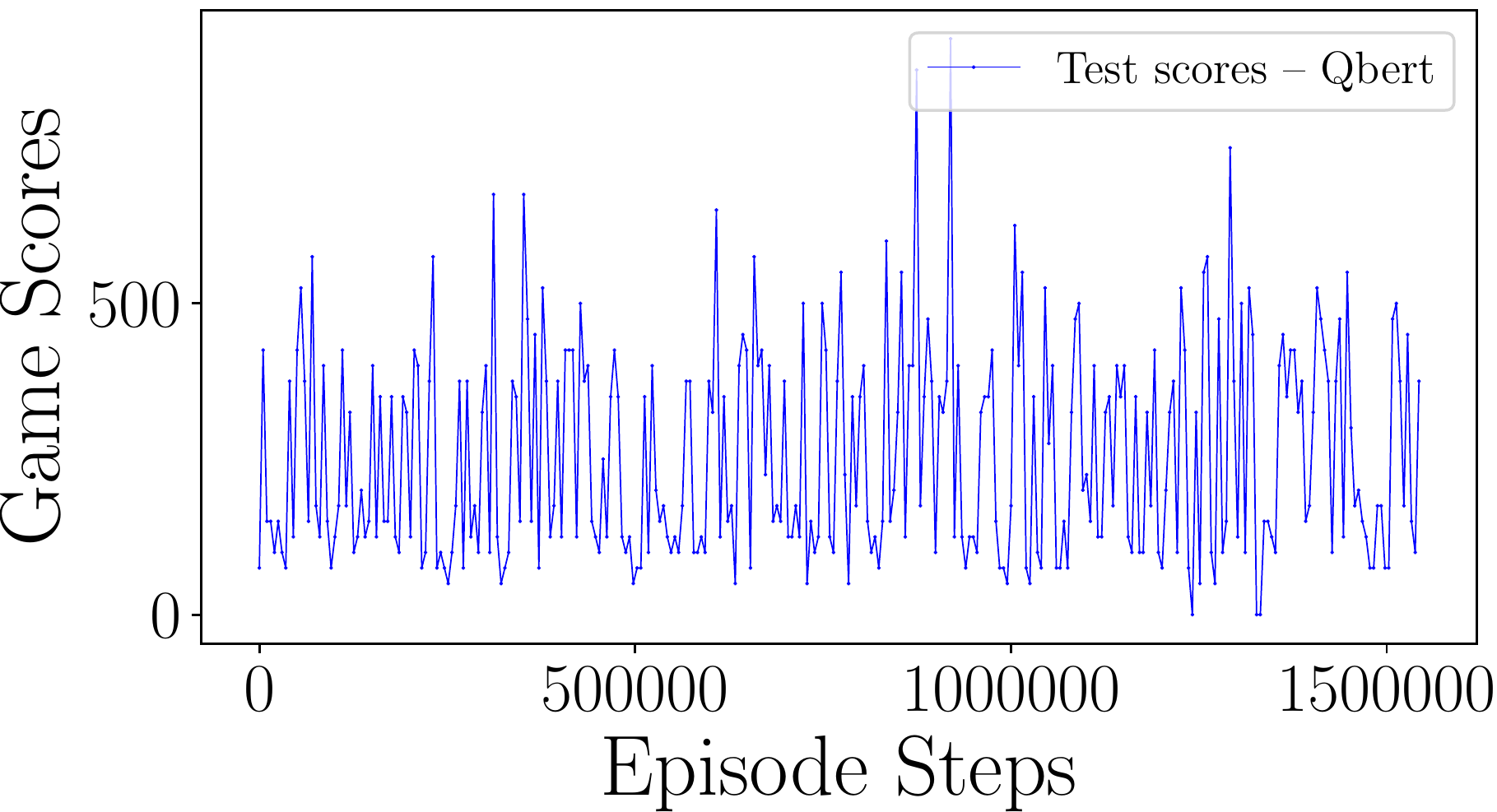} & 
		\includegraphics[width=.3\textwidth]{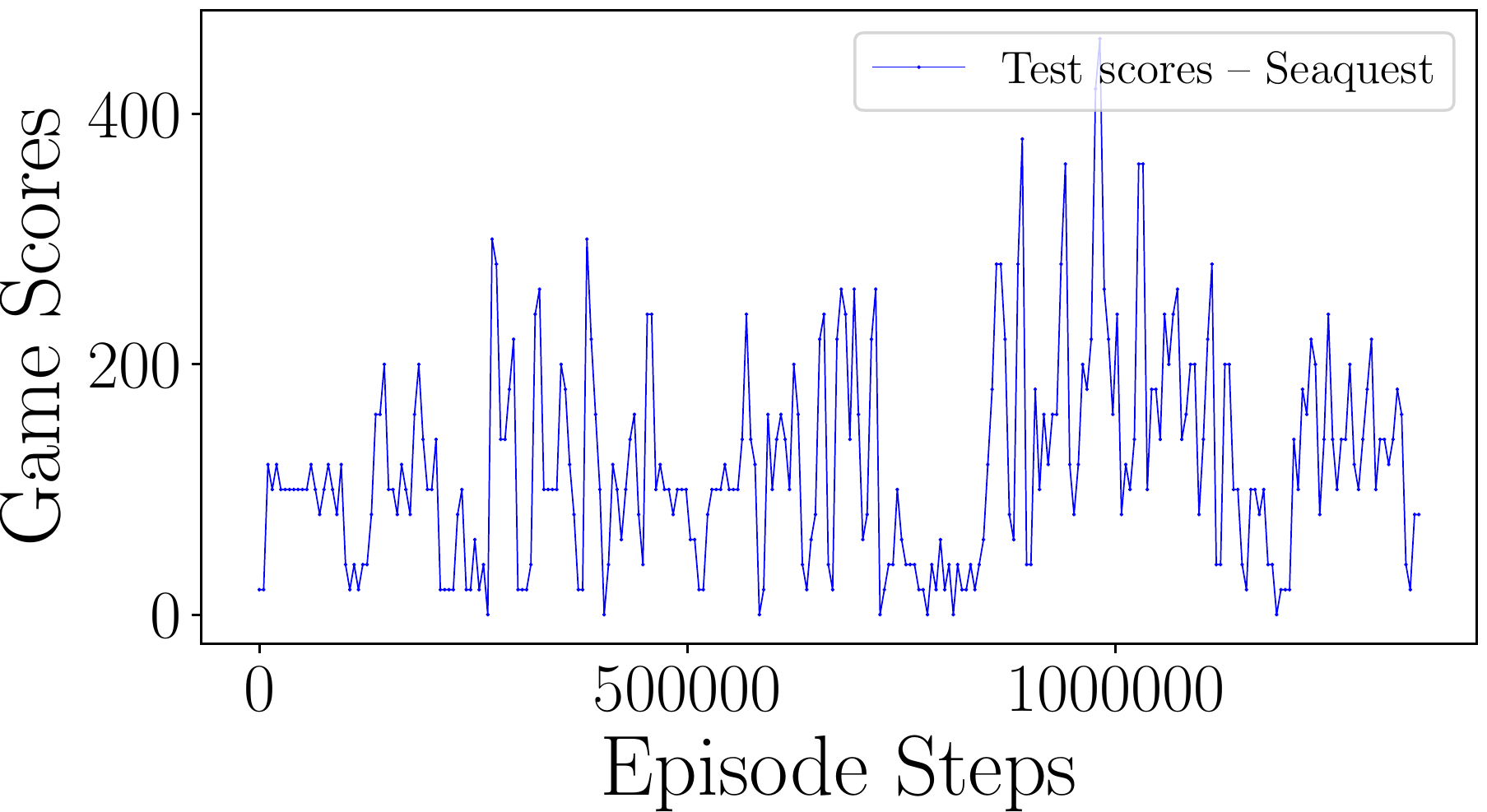} & 	\includegraphics[width=.3\textwidth]{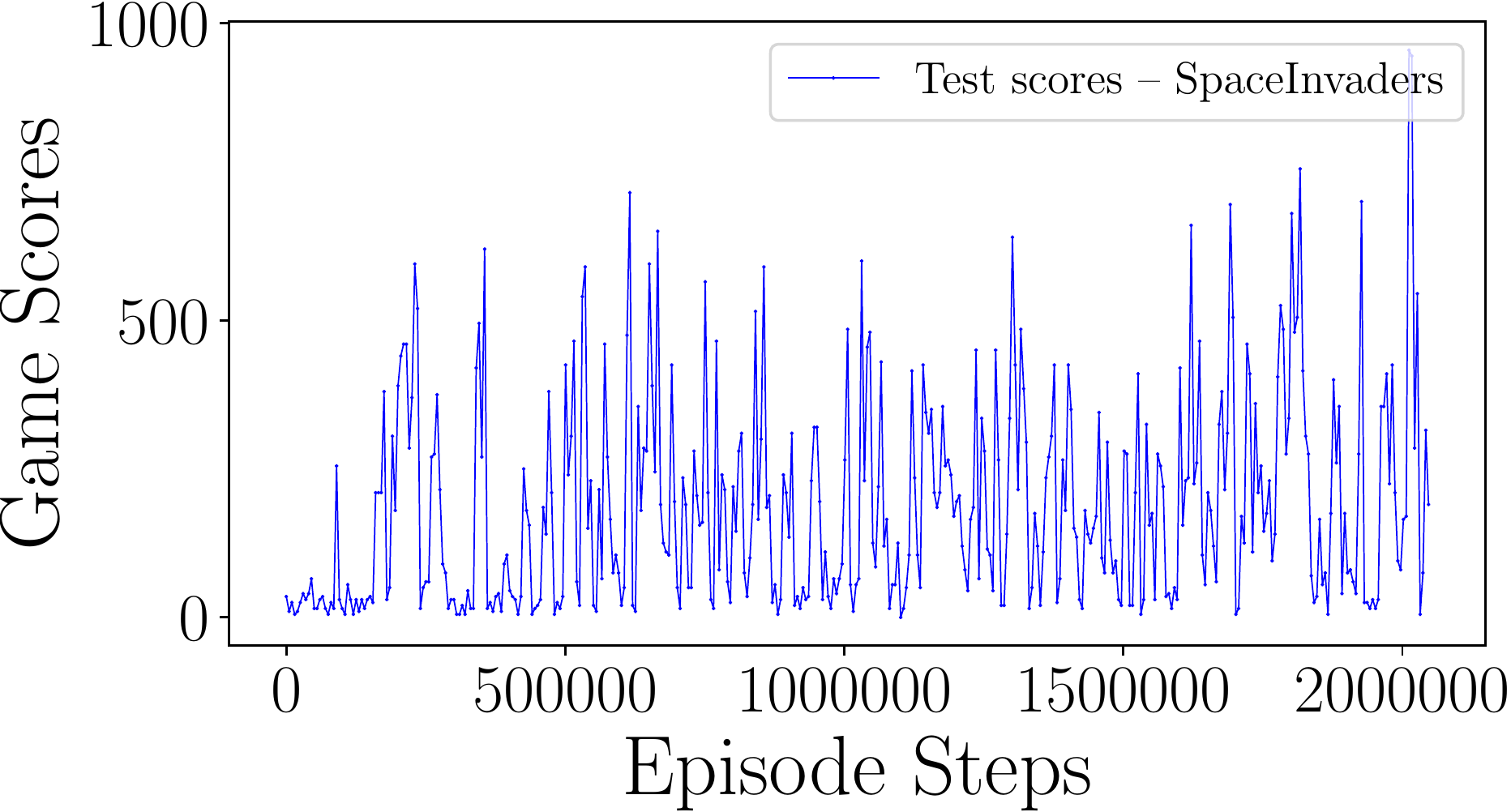} \\
		(g) & (h) & (i) \\ 
		\includegraphics[width=.3\textwidth]{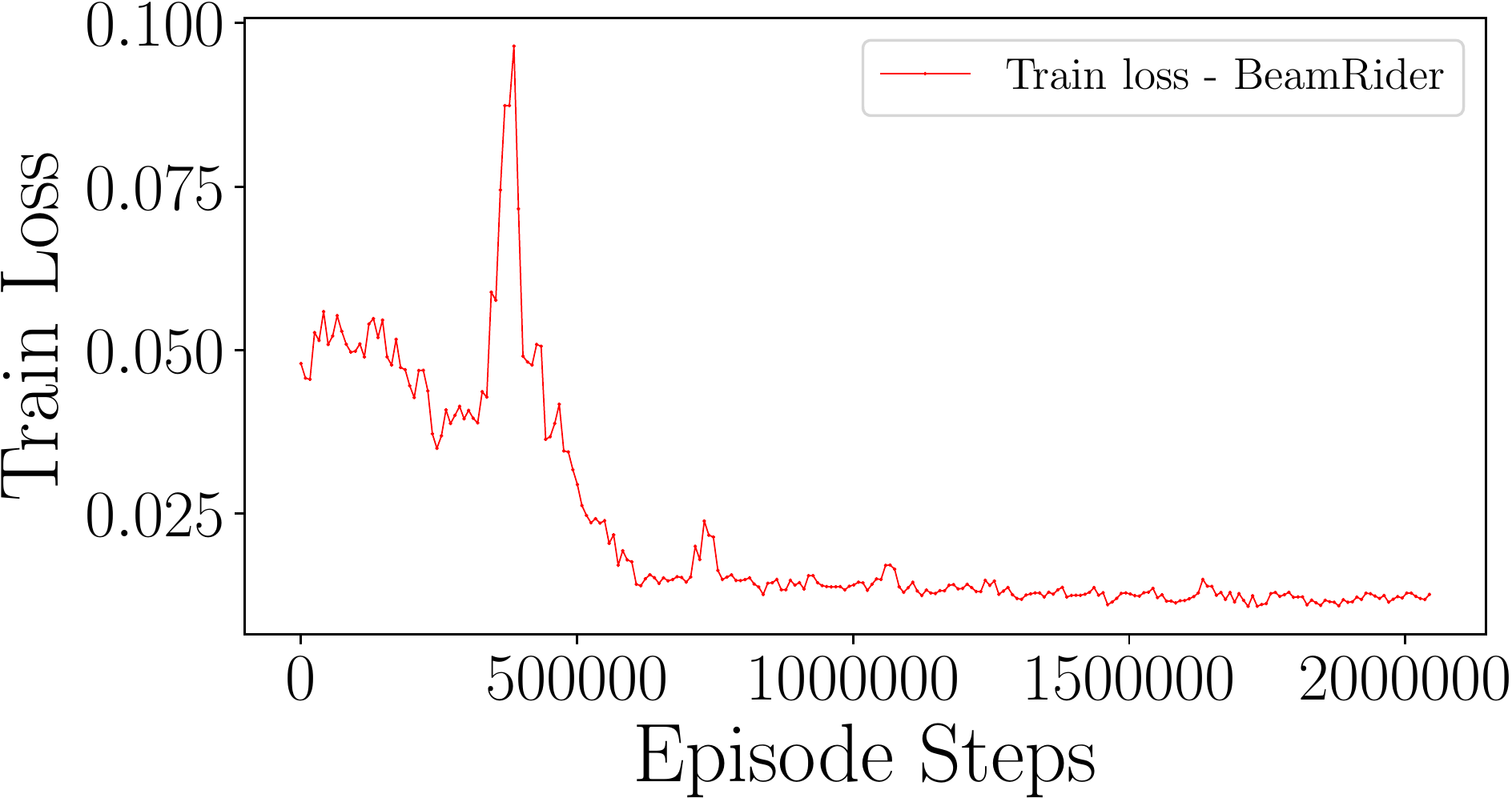} & 
		\includegraphics[width=.3\textwidth]{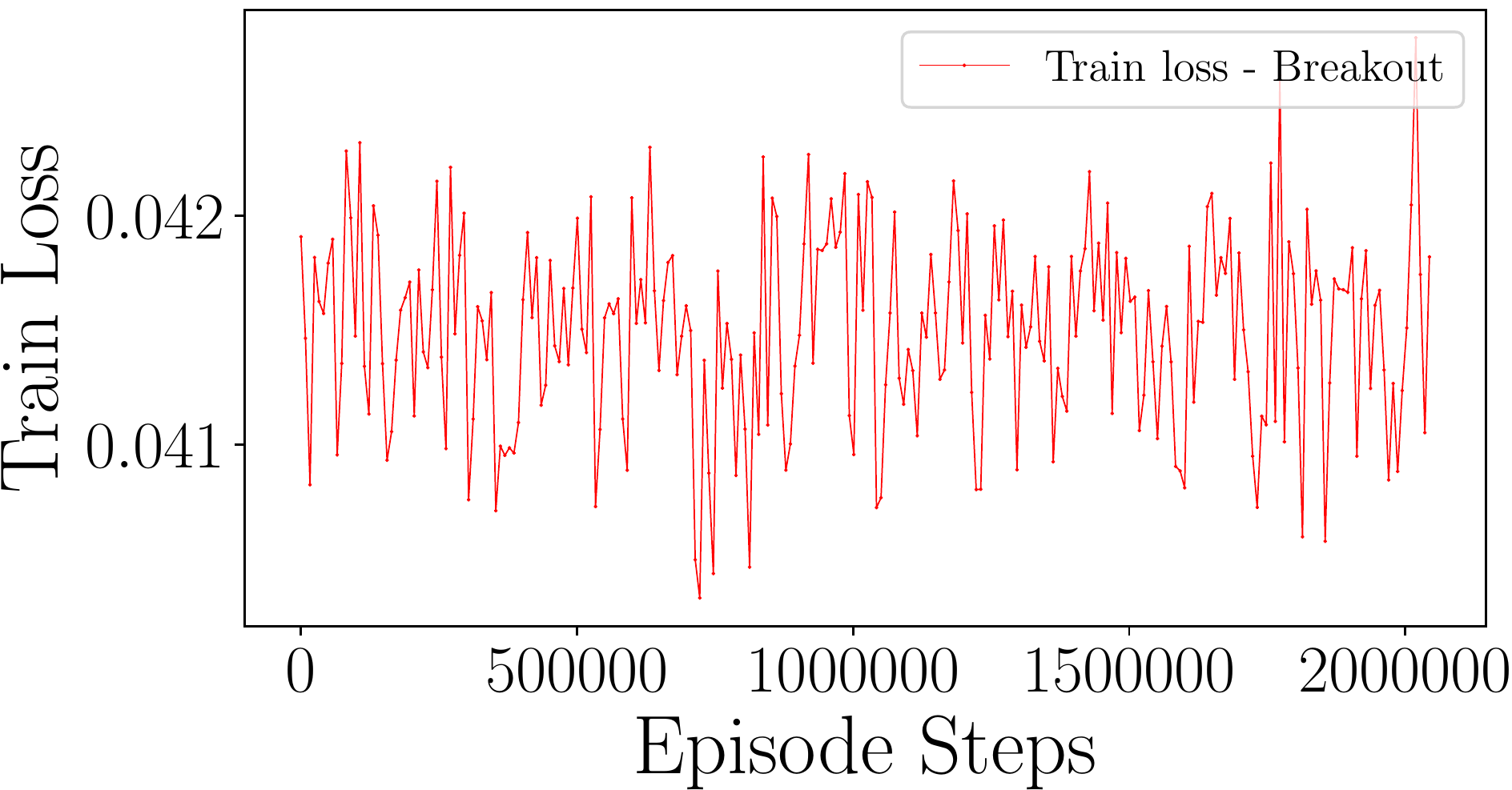} & 	\includegraphics[width=.3\textwidth]{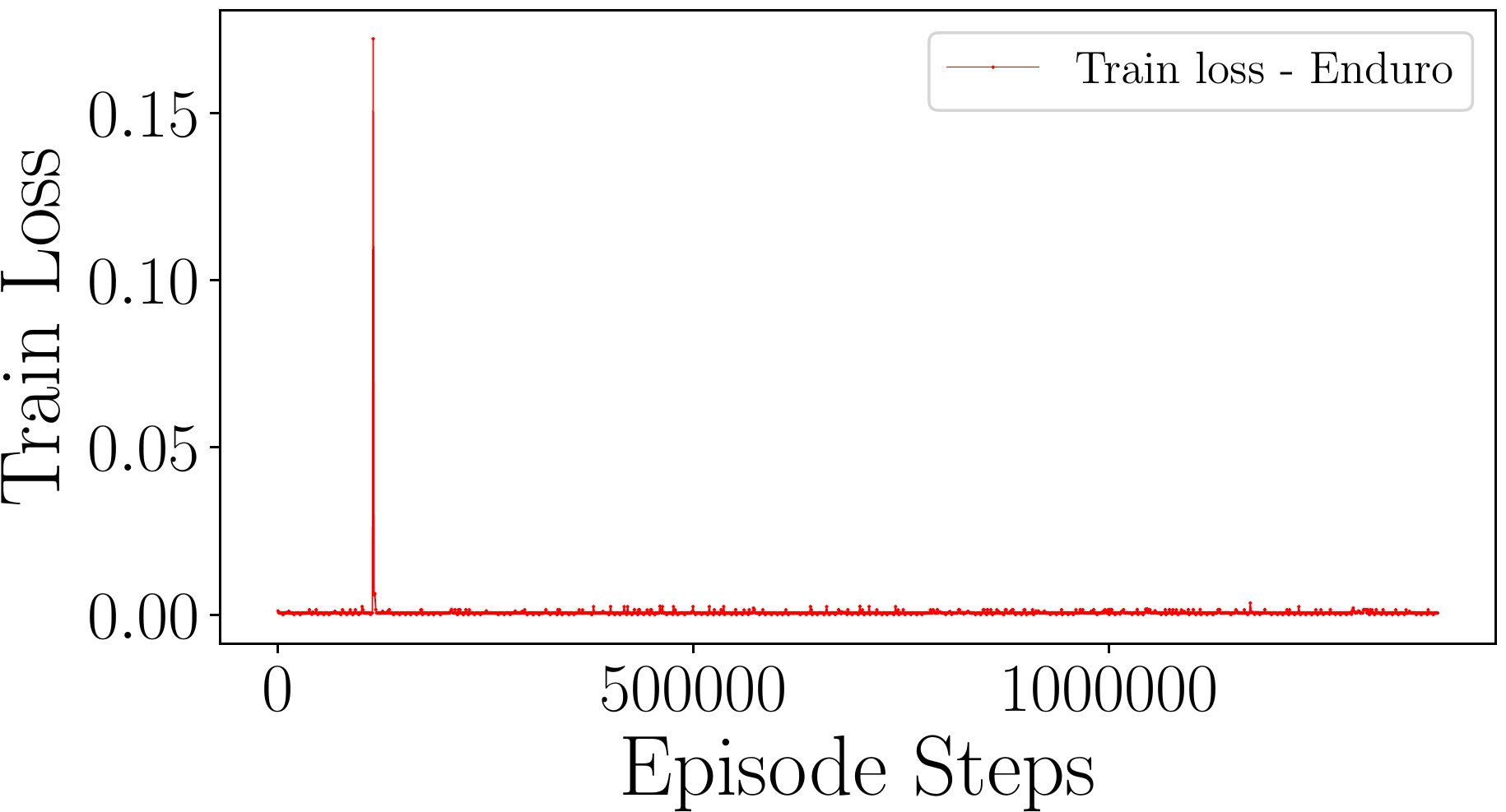} \\
		(j) & (k) & (l)\\
		\includegraphics[width=.3\textwidth]{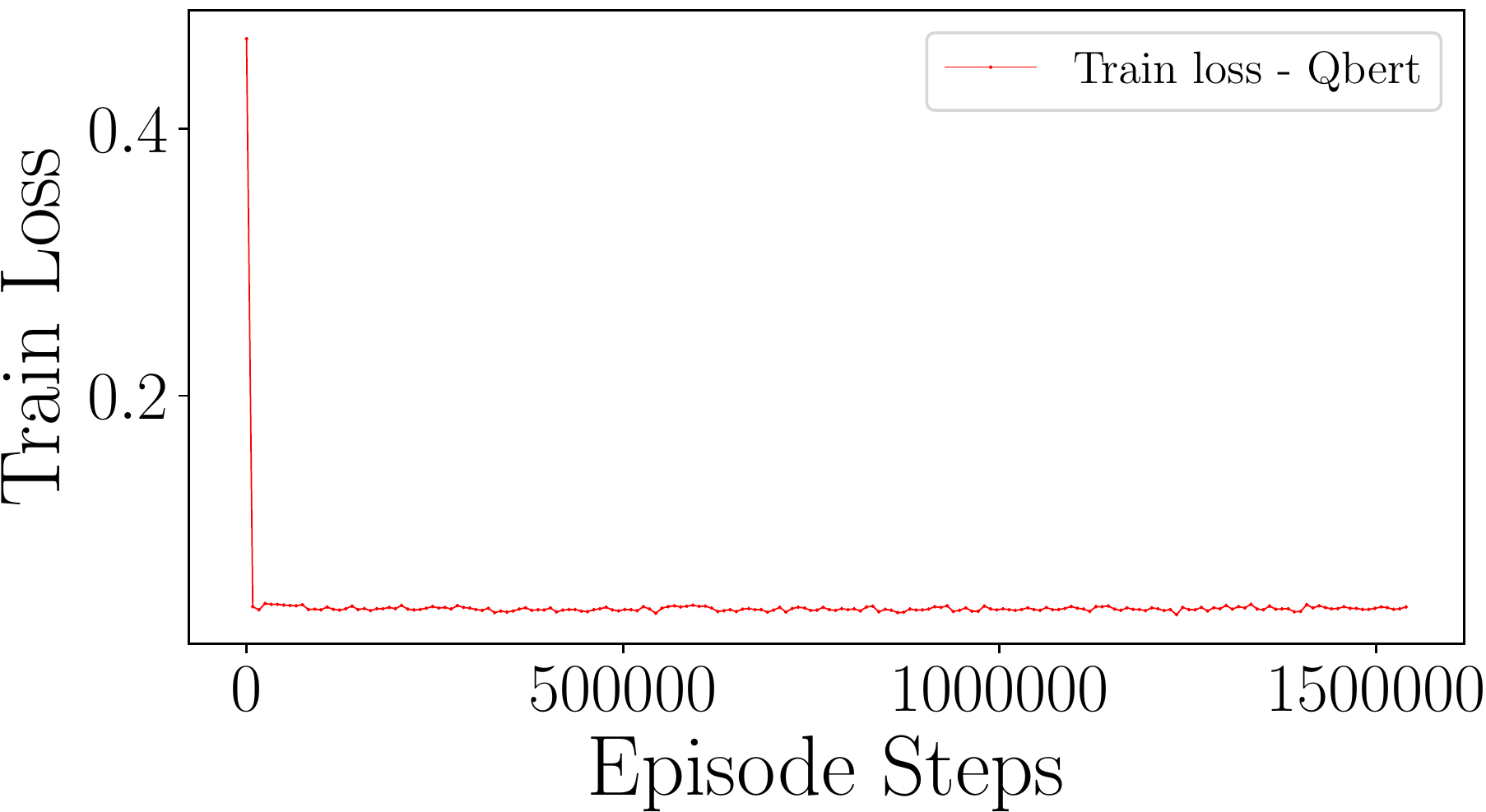} & 
		\includegraphics[width=.3\textwidth]{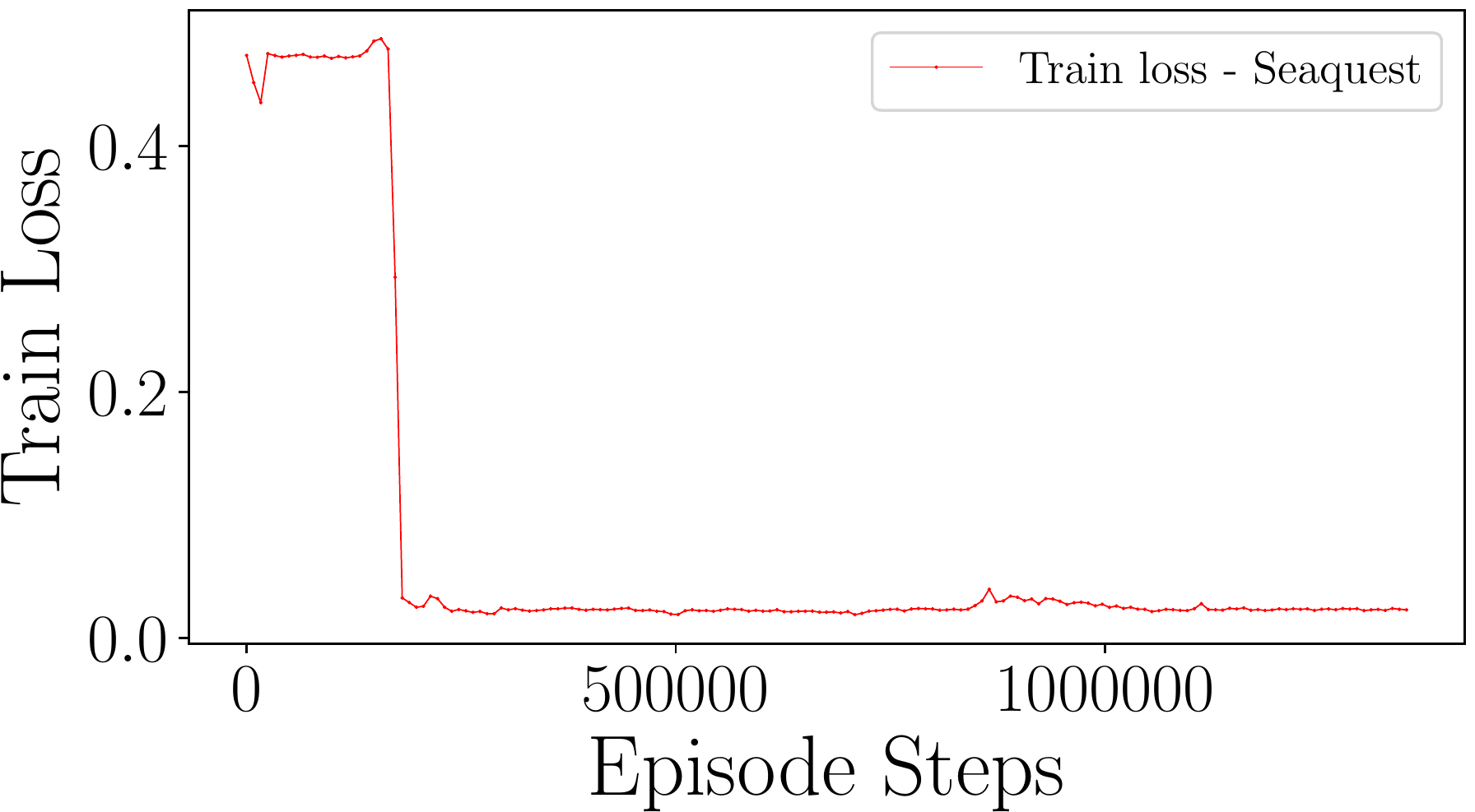} & 	\includegraphics[width=.3\textwidth]{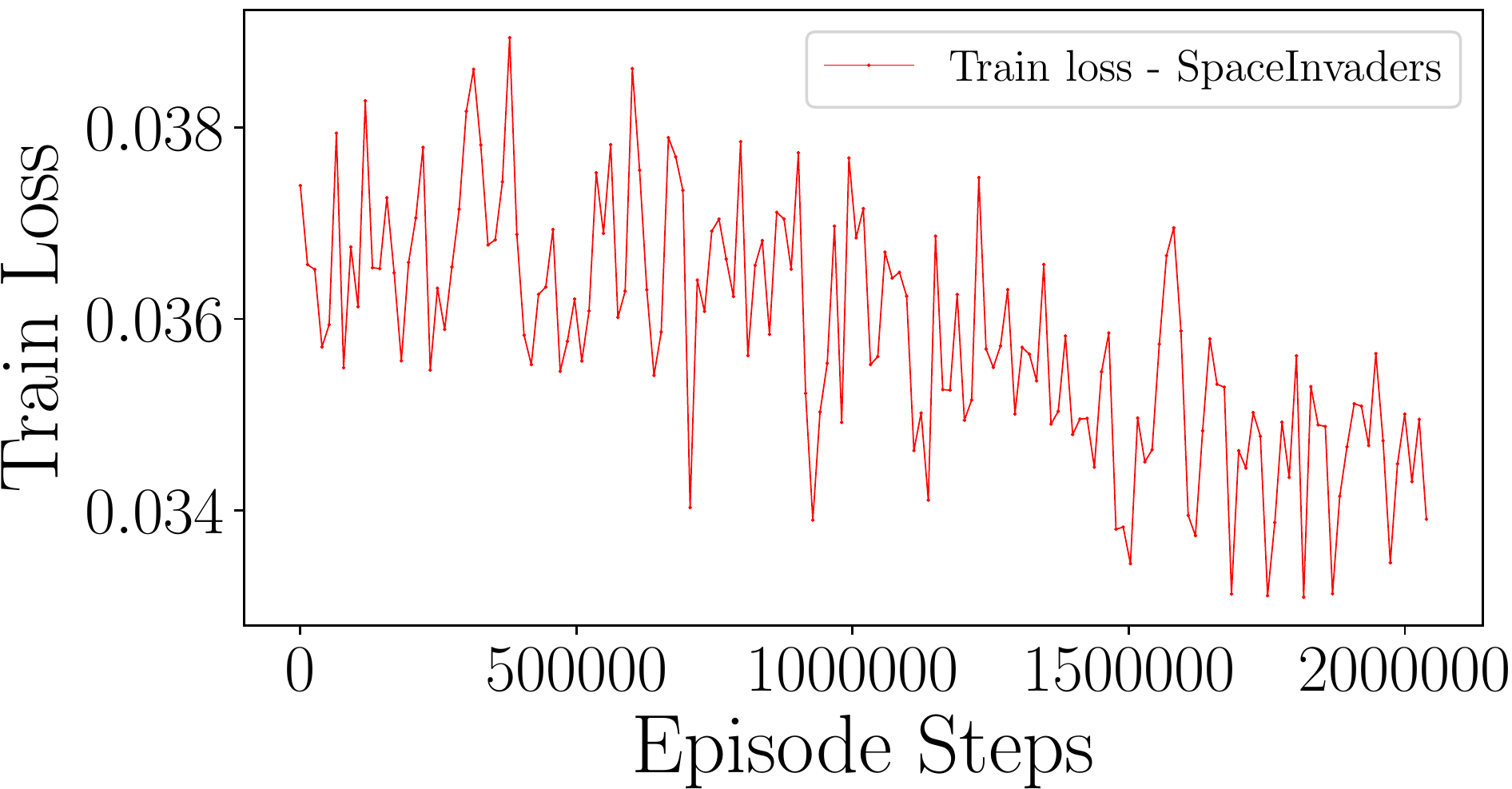} 
	\end{tabular}
	\caption{(a) -- (f) Test scores and (g) -- (l) training loss for six ATARI games --- Beam Rider, Breakout, Enduro, Q*bert, Seaquest, and Space Invaders. The results are form simulations with batch size $b=2048$ and the L-BFGS memory size $m=40$.}
	\label{fig:score-time-ATARI-Games}
\end{figure*}

The results of the Deep L-BFGS Q-Learning algorithm is summarized in Table \ref{t:summary}, which also includes an expert human performance and some recent model-free methods: the Sarsa algorithm \citep{Bellemare:2013:ALE}, the \emph{contingency aware} method from \citep{Bellemare:2012:Contingency}, deep Q-learning \citep{DeepMind:Atari:2013}, and two methods based on policy optimization called Trust Region Policy Optimization (TRPO vine and TRPO single path) \citep{Schulman:2015:TRPO-ATARI}. Our method outperformed most other methods in Space Invaders game. Our deep L-BFGS Q-learning method consistently achieved reasonable scores in the other games. Our simulations only was trained on about 2 million steps (much less than other methods). The training time for our simulations were in the order of 3 hours, which outperformed all other methods. For example, 500 iterations of the TRPO  algorithm took about 30 hours \citep{Schulman:2015:TRPO-ATARI}.  
\renewcommand{\arraystretch}{1.4}
\begin{table}[hbt!]
	\small
	\centering
	\caption{Game Scores for ATARI 2600 Games with different learning methods. Beam Rider (BR), Breakout (BO), Enduro (EO), Q*bert (Q*B), Seaquest (SQ), and Space Invaders (SI)}
	\begin{tabular}[t]{c|cccccc} 
		\hline
		\textbf{Method} &  \textbf{BR} & \textbf{BO} &  \textbf{EO} & \textbf{Q*B} & \textbf{SQ} & \textbf{SI} \\ \hline 
		Random & 354 & 1.2 &0 &157 & 110 & 179 \\ \hline 
		Human & 7456 & 31 & 368 & 18900 & 28010 & 3690  \\ \hline 
		Sarsa
		\citep{Bellemare:2013:ALE}&
		996 & 5.2 &
		129 &
		614 &
		665 &
		271 \\ \hline 
		Contingency \citep{Bellemare:2012:Contingency} &
		1743 & 6 &
		159 &
		960 &
		723
		& 268	\\ \hline
		HNeat Pixel \citep{Hausknecht:2014:HNeat-ATARI}&
		1332 & 4
		& 91
		& 1325 &
		800 &
		1145 \\ \hline
		DQN \citep{DeepMind:Atari:2013} & 4092 &168 & 470 & 1952 &1705 & 581 \\ \hline  
		TRPO, Single path \citep{Schulman:2015:TRPO-ATARI} & 1425 & 10 & 534 & 1973 & 1908 &568 \\ \hline 
		TRPO, Vine \citep{Schulman:2015:TRPO-ATARI} & 859 & 34 & 431 & 7732 & 7788 & 450\\ \hline 
		Our method & 1380 & 18 & 49 &1525 & 600 & 955 \\\hline
	\end{tabular}
	\label{t:summary}
\end{table}
\renewcommand{\arraystretch}{1.0}
\section{Future Work}
For future work, we will consider the optimization methods based on trust-region methods. The trust-region methods require choosing some hyperparameters, like proper initial trust region radius, but they might converge faster than line search since they do not require satisfying the curvature conditions, and sufficient decrease. Additionally the trust-region algorithm can shrink or expand the trust-region radius based on the quality of the search step. Also, since the true Hessian is indefinite, using indefinite quasi-Newton matrices, like Symmetric Rank 1 (SR1), or Full Broyden Class (FBC) within trust-region methods might lead to better convergence properties. We will study these methods as a future work. 
\section{Conclusions}
\label{sec:conclusions}
We proposed and implemented a novel optimization method based on line search limited-memory BFGS for deep reinforcement learning framework. We tested our method on six classic ATARI 2600 games. The L-BFGS method attempts to approximate the Hessian matrix by constructing  positive definite matrices with low-rank updates. Due to the nonconvex and nonlinear loss function in deep reinforcement learning, our numerical experiments show that using the curvature information in computing the search direction leads to a more robust convergence. Our proposed deep L-BFGS Q-Learning method is designed to be efficient for parallel computations in GPU. Our method is much faster than the existing methods in the literature, and it is memory efficient since it does not need to store a large experience replay memory.

\newpage 
\bibliography{draft}
\bibliographystyle{apalike}

\end{document}